\documentclass[sigconf,nonacm]{acmart}

\usepackage{subcaption}
\usepackage{enumitem}
\usepackage{microtype}
\newcommand{\dataset}[1]{\textsc{#1}}
\newcommand{\model}[1]{\textit{#1}}
\newcommand{\pehe}{\sqrt{\mathrm{PEHE}}}
\newcommand{\cxQiniA}{-1.18}
\newcommand{\cxQiniB}{-1.63}
\newcommand{\cxAuucA}{-3.18}
\newcommand{\cxAuucB}{-1.71}
\newcommand{\cxPeheA}{1.24}
\newcommand{\cxPeheB}{0.06}
\newcommand{\cxQiniAdrop}{-1.54}
\newcommand{\cxQiniBdrop}{+1.13}

\newcommand{\calNident}{5}
\newcommand{\calRhoId}{+0.43}
\newcommand{\calRhoIdLo}{+0.07}
\newcommand{\calRhoIdHi}{+0.80}
\newcommand{\calDisIdN}{5}
\newcommand{\calDisIdD}{5}
\newcommand{\calDeltaMed}{0.007}
\newcommand{\calDeltaLo}{0.004}
\newcommand{\calDeltaHi}{0.282}
\newcommand{\calRhoAll}{+0.18}
\newcommand{\calRhoAllLo}{+0.02}
\newcommand{\calRhoAllHi}{+0.34}
\newcommand{\calDisAllN}{21}
\newcommand{\calDisAllD}{25}

\newcommand{\calBinHi}{1.000}
\newcommand{\calPanelModels}{8}
\newcommand{\calPanelSeeds}{20}

\newcommand{\ihdpN}{100}
\newcommand{\ihdpQini}{+0.07}
\newcommand{\ihdpQiniLo}{-0.03}
\newcommand{\ihdpQiniHi}{+0.16}
\newcommand{\ihdpAuuc}{+0.56}
\newcommand{\ihdpUplift}{+0.64}
\newcommand{\ihdpDelta}{+0.49}
\newcommand{\ihdpDeltaLo}{+0.40}
\newcommand{\ihdpDeltaHi}{+0.59}
\newcommand{\ihdpDeltaPos}{80}

\newcommand{\regQini}{0.028}
\newcommand{\regQiniLo}{0.009}
\newcommand{\regQiniHi}{0.046}
\newcommand{\regAuuc}{0.028}
\newcommand{\regAuucLo}{0.009}
\newcommand{\regAuucHi}{0.046}
\newcommand{\regUplift}{0.026}
\newcommand{\regUpliftLo}{0.006}
\newcommand{\regUpliftHi}{0.045}
\newcommand{\regLo}{0.026}
\newcommand{\regHi}{0.028}
\newcommand{\regRelLo}{14}
\newcommand{\regRelHi}{15}
\newcommand{\regBase}{0.021}
\newcommand{\regDiffLo}{-0.007}
\newcommand{\regDiffHi}{+0.019}
\newcommand{\regDiffAuucLo}{-0.007}
\newcommand{\regDiffAuucHi}{+0.019}
\newcommand{\regDiffUpliftLo}{-0.010}
\newcommand{\regDiffUpliftHi}{+0.018}
\newcommand{\regBestLo}{0.14}
\newcommand{\regBestHi}{0.23}
\newcommand{\regN}{10}

\newcommand{\regWithinA}{30}
\newcommand{\regWithinB}{43}
\newcommand{\regWithinC}{43}
\newcommand{\xregQiniRho}{-0.35}
\newcommand{\xregQiniRhoLo}{-0.53}
\newcommand{\xregQiniRhoHi}{-0.18}
\newcommand{\xregLo}{0.017}
\newcommand{\xregHi}{0.023}

\newcommand{\selRiskGain}{+0.0209}
\newcommand{\selRiskGainLo}{+0.0071}
\newcommand{\selRiskGainHi}{+0.0358}
\newcommand{\selQiniGain}{-0.0070}
\newcommand{\selQiniGainLo}{-0.0195}
\newcommand{\selQiniGainHi}{+0.0073}
\newcommand{\selAuucGain}{-0.0070}
\newcommand{\selAuucGainLo}{-0.0195}
\newcommand{\selAuucGainHi}{+0.0073}
\newcommand{\selUpliftGain}{-0.0050}
\newcommand{\selUpliftGainLo}{-0.0183}
\newcommand{\selUpliftGainHi}{+0.0097}
\newcommand{\selOracleGain}{+0.0450}
\newcommand{\selRefModalAgree}{77}

\newcommand{\cmlRankRho}{+0.99}
\newcommand{\cmlRankRhoLo}{+0.98}
\newcommand{\cmlRankRhoHi}{+0.99}
\newcommand{\cmlWinnerAgree}{100}
\newcommand{\cmlMOne}{-0.01}
\newcommand{\cmlMOneLo}{-0.11}
\newcommand{\cmlMOneHi}{+0.09}
\newcommand{\canMOne}{+0.00}
\newcommand{\canMOneLo}{-0.10}
\newcommand{\canMOneHi}{+0.10}
\newcommand{\rateAutocMOne}{+0.15}
\newcommand{\rateAutocMOneLo}{+0.05}
\newcommand{\rateAutocMOneHi}{+0.24}
\newcommand{\rateQiniMOne}{+0.17}
\newcommand{\rateQiniMOneLo}{+0.08}
\newcommand{\rateQiniMOneHi}{+0.26}
\newcommand{\auucMOne}{+0.56}
\newcommand{\auucMOneLo}{+0.49}
\newcommand{\auucMOneHi}{+0.62}
\newcommand{\acicQiniMOne}{+0.48}
\newcommand{\acicQiniMOneLo}{+0.25}
\newcommand{\acicQiniMOneHi}{+0.68}
\newcommand{\acicRateAutocMOne}{+0.65}

\newcommand{\acicUpliftMOne}{+0.45}

\newcommand{\acicN}{18}
\newcommand{\kgridStab}{+0.75}
\newcommand{\kgridStabLo}{+0.73}
\newcommand{\kgridStabHi}{+0.77}
\newcommand{\kregUpliftZeroOne}{+0.029}
\newcommand{\kregUpliftZeroOneLo}{+0.009}
\newcommand{\kregUpliftZeroOneHi}{+0.051}
\newcommand{\kregUpliftZeroTwo}{+0.029}
\newcommand{\kregUpliftZeroTwoLo}{+0.012}
\newcommand{\kregUpliftZeroTwoHi}{+0.045}
\newcommand{\kregUpliftZeroThree}{+0.026}
\newcommand{\kregUpliftZeroThreeLo}{+0.008}
\newcommand{\kregUpliftZeroThreeHi}{+0.044}
\newcommand{\kregUpliftZeroFive}{+0.027}
\newcommand{\kregUpliftZeroFiveLo}{+0.007}
\newcommand{\kregUpliftZeroFiveHi}{+0.048}
\newcommand{\kregPVAUC}{+0.024}
\newcommand{\kregPVAUCLo}{+0.009}
\newcommand{\kregPVAUCHi}{+0.038}

\newcommand{\piSweepImbDelta}{+0.015}
\newcommand{\piSweepImbDeltaLo}{-0.024}
\newcommand{\piSweepImbDeltaHi}{+0.056}

\newcommand{\piSweepNSeeds}{200}

\newcommand{\riskUnitSeMed}{0.023}
\newcommand{\riskUnitSeLo}{0.021}
\newcommand{\riskUnitSeHi}{0.024}

\newcommand{\regPosQini}{8}
\newcommand{\regRhoStarQini}{0.12}

\newcommand{\regRhoStarUplift}{0.06}

\newcommand{\bOneQiniVrefReg}{-0.002}
\newcommand{\bOneQiniVrefRegLo}{-0.004}
\newcommand{\bOneQiniVrefRegHi}{+0.002}
\newcommand{\bOneQiniVrefGain}{+0.007}
\newcommand{\bOneQiniVrefGainLo}{+0.006}
\newcommand{\bOneQiniVrefGainHi}{+0.009}

\newcommand{\bTwoQini}{+0.02}
\newcommand{\bTwoQiniLo}{-0.33}
\newcommand{\bTwoQiniHi}{+0.37}

\newcommand{\bTwoDelta}{+0.63}

\newcommand{\bFourAdj}{+0.22}
\newcommand{\bFourAdjLo}{+0.12}
\newcommand{\bFourAdjHi}{+0.31}
\newcommand{\bFourN}{100}

\newcommand{\correrrKeyDelta}{+0.184}
\newcommand{\correrrKeyDeltaLo}{+0.136}
\newcommand{\correrrKeyDeltaHi}{+0.230}

\newcommand{\correrrNSeeds}{200}

\newcommand{\kurtQiniCorr}{+0.14}
\newcommand{\kurtQiniCorrLo}{-0.07}
\newcommand{\kurtQiniCorrHi}{+0.35}
\newcommand{\kurtGapCorr}{-0.07}
\newcommand{\kurtGapCorrLo}{-0.29}
\newcommand{\kurtGapCorrHi}{+0.15}
\newcommand{\kurtLo}{-1}
\newcommand{\kurtHi}{27}

\newcommand{\jobsRiskCorr}{+0.41}

\newcommand{\thrRegSign}{+0.0253}
\newcommand{\thrRegSignLo}{+0.0064}
\newcommand{\thrRegSignHi}{+0.0437}
\newcommand{\thrRegCalib}{+0.0049}
\newcommand{\thrRegCalibLo}{-0.0082}
\newcommand{\thrRegCalibHi}{+0.0198}
\newcommand{\thrClosed}{81}

\newcommand{\drQiniSeight}{199.6}
\newcommand{\drPeheMax}{153{,}013}
\newcommand{\drAuditBTen}{1.4}
\newcommand{\drAuditBFifty}{2.1}

\newcommand{\notuneQini}{+0.23}
\newcommand{\notuneQiniLo}{-0.07}
\newcommand{\notuneQiniHi}{+0.52}
\newcommand{\notuneAuuc}{+0.78}
\newcommand{\notuneAuucLo}{+0.66}
\newcommand{\notuneAuucHi}{+0.89}
\newcommand{\notuneUk}{+0.77}
\newcommand{\notuneUkLo}{+0.66}
\newcommand{\notuneUkHi}{+0.87}
\newcommand{\notuneDelta}{+0.55}
\newcommand{\notuneDeltaLo}{+0.33}
\newcommand{\notuneDeltaHi}{+0.77}
\newcommand{\notunePos}{9}
\newcommand{\notuneNModels}{6}

\newcommand{\sepcovN}{100}

\newcommand{\sepcovarmratioQini}{-0.33}
\newcommand{\sepcovarmratioQiniLo}{-0.49}
\newcommand{\sepcovarmratioQiniHi}{-0.14}

\newcommand{\sepcovarmratioGap}{+0.05}
\newcommand{\sepcovarmratioGapLo}{-0.15}
\newcommand{\sepcovarmratioGapHi}{+0.25}

\newcommand{\atQini}{+0.03}
\newcommand{\atQiniLo}{-0.24}
\newcommand{\atQiniHi}{+0.33}
\newcommand{\atAuuc}{+0.59}
\newcommand{\atAuucLo}{+0.43}
\newcommand{\atAuucHi}{+0.75}
\newcommand{\atDelta}{+0.56}
\newcommand{\atDeltaLo}{+0.34}
\newcommand{\atDeltaHi}{+0.78}
\newcommand{\atPos}{9}

\newcommand{\relQini}{+0.02}
\newcommand{\relQiniLo}{-0.33}
\newcommand{\relQiniHi}{+0.37}
\newcommand{\relAuuc}{+0.65}
\newcommand{\relAuucLo}{+0.46}
\newcommand{\relAuucHi}{+0.81}
\newcommand{\relDelta}{+0.63}
\newcommand{\relDeltaLo}{+0.35}
\newcommand{\relDeltaHi}{+0.91}
\newcommand{\relPos}{9}

\newcommand{\rsQini}{+0.39}
\newcommand{\rsQiniLo}{+0.00}
\newcommand{\rsQiniHi}{+0.73}
\newcommand{\rsAuuc}{+0.37}
\newcommand{\rsAuucLo}{-0.01}
\newcommand{\rsAuucHi}{+0.70}
\newcommand{\rsDelta}{-0.02}
\newcommand{\rsDeltaLo}{-0.19}
\newcommand{\rsDeltaHi}{+0.17}
\newcommand{\rsPos}{4}
\newcommand{\rsN}{10}
\newcommand{\rsNModels}{6}
\newcommand{\rsKurt}{44}
\newcommand{\rsKurtLo}{19}
\newcommand{\rsKurtHi}{109}
\newcommand{\ihdpKurtMed}{-0.4}
\newcommand{\rsDeltaExclDR}{-0.08}
\newcommand{\rsDeltaExclDRR}{-0.16}

\newcommand{\acicAuucMOne}{+0.55}
\newcommand{\acicAuucMOneLo}{+0.34}
\newcommand{\acicAuucMOneHi}{+0.73}
\newcommand{\acicGap}{+0.11}
\newcommand{\acicGapLo}{-0.03}
\newcommand{\acicGapHi}{+0.25}

\newcommand{\eHundDeltaExDR}{+0.28}
\newcommand{\eHundDeltaExDRLo}{+0.18}
\newcommand{\eHundDeltaExDRHi}{+0.38}
\newcommand{\eHundDeltaExDRR}{+0.34}
\newcommand{\eHundDeltaExDRRLo}{+0.21}
\newcommand{\eHundDeltaExDRRHi}{+0.46}
\newcommand{\concQini}{0.52}
\newcommand{\concQiniLo}{0.49}
\newcommand{\concQiniHi}{0.56}
\newcommand{\concAuuc}{0.73}
\newcommand{\concAuucLo}{0.69}
\newcommand{\concAuucHi}{0.76}
\newcommand{\concN}{1500}
\newcommand{\mechQiniIhdp}{-0.22}
\newcommand{\mechQiniIhdpLo}{-0.40}
\newcommand{\mechQiniIhdpHi}{-0.02}
\newcommand{\mechGapIhdp}{-0.01}
\newcommand{\mechGapIhdpLo}{-0.22}
\newcommand{\mechGapIhdpHi}{+0.19}
\newcommand{\mechAuucIhdp}{-0.27}
\newcommand{\mechAuucIhdpLo}{-0.46}
\newcommand{\mechAuucIhdpHi}{-0.06}
\newcommand{\mechQiniAcic}{-0.55}
\newcommand{\mechQiniAcicLo}{-0.88}
\newcommand{\mechQiniAcicHi}{-0.02}
\newcommand{\mechGapAcic}{+0.06}
\newcommand{\mechGapAcicLo}{-0.43}
\newcommand{\mechGapAcicHi}{+0.58}
\newcommand{\mechAuucAcic}{-0.50}
\newcommand{\mechAuucAcicLo}{-0.84}
\newcommand{\mechAuucAcicHi}{-0.01}
\newcommand{\rMedIhdp}{3.52}
\newcommand{\rMedAcic}{1.07}
\newcommand{\rMedRev}{0.39}
\newcommand{\proxTrackSixIhdp}{-0.00}
\newcommand{\proxTrackFourIhdp}{+0.99}
\newcommand{\proxMedFourIhdp}{4.64}

\newcommand{\proxTrackFourAcic}{+1.00}
\newcommand{\proxMedFourAcic}{1.16}

\newcommand{\gainAuuc}{+0.73}
\newcommand{\gainAuucLo}{+0.66}
\newcommand{\gainAuucHi}{+0.79}
\newcommand{\meanAuucRho}{+0.56}
\newcommand{\meanAuucRhoLo}{+0.49}
\newcommand{\meanAuucRhoHi}{+0.63}

\newcommand{\gainAuucN}{100}
\newcommand{\gainCmlRho}{+1.00}
\newcommand{\gainCmlFolds}{5400}
\newcommand{\gainCmlDev}{5.3\%}

\newcommand{\nfixGap}{+0.63}
\newcommand{\nfixGapLo}{+0.35}
\newcommand{\nfixGapHi}{+0.91}
\newcommand{\nfixOldGap}{+0.83}
\newcommand{\nfixOldGapLo}{+0.58}
\newcommand{\nfixOldGapHi}{+1.09}
\newcommand{\nfixCfCorr}{+1.00}

\begin{document}

\title{UpliftBench: Revealing Outcome-Regime and Objective Mismatch in Uplift Evaluation}

\author{Binshuang Li}
\affiliation{%
  \institution{Independent Researcher}
  \country{USA}
}

\begin{abstract}
Uplift modeling (conditional-average-treatment-effect estimation) drives personalized
targeting, yet published uplift benchmarks frequently disagree on which estimator performs
best; we show the disagreement is substantially about \emph{metrics}, not models.
UpliftBench evaluates 12 uplift estimators under an outer-test-isolated, multi-objective protocol
across seven dataset families; its two findings are identified where a reference objective
exists --- F1 on the standard continuous benchmark (IHDP), F2 in a within-sample case
study on Jobs.
On that benchmark, Qini shows no
detectable alignment with effect accuracy --- across all $\ihdpN$ IHDP realizations its mean
rank correlation with effect accuracy is $\ihdpQini$, 95\% CI $[\ihdpQiniLo,\ihdpQiniHi]$
--- while AUUC is consistently more aligned (paired prefix-mean-AUUC-over-Qini gap $\ihdpDelta$
[$\ihdpDeltaLo,\ihdpDeltaHi$]; the shipped cumulative-gain AUUC aligns better still,
$\gainAuuc$). On Jobs, ranking metrics are structurally insufficient for a
sign-threshold policy because they discard the score level; empirically, within the released
split-rotation analysis direct policy-risk
selection yields lower benchmark regret than random model selection while Qini, AUUC, and
uplift-at-$k$ do not
($\regRelLo$--$\regRelHi\%$ regret). Calibrating the decision threshold removes $\thrClosed\%$ of the Qini-selection regret. Both findings are bounded, not universal: F1 is not detected on either validation family
(the ACIC and Revenue-Synthetic gaps are both indistinguishable from zero), and F2
vanishes under a budgeted-value objective where rank
suffices.
UpliftBench releases versioned loaders, fixed protocols, result artifacts, and a
reproducible living leaderboard; the public repository accompanies the paper.
\end{abstract}

\begin{CCSXML}
<ccs2012>
<concept><concept_id>10010147.10010257</concept_id>
<concept_desc>Computing methodologies~Machine learning</concept_desc>
<concept_significance>500</concept_significance></concept>
<concept><concept_id>10010147.10010178.10010179</concept_id>
<concept_desc>Computing methodologies~Causal reasoning and diagnostics</concept_desc>
<concept_significance>500</concept_significance></concept>
</ccs2012>
\end{CCSXML}
\ccsdesc[500]{Computing methodologies~Machine learning}
\ccsdesc[500]{Computing methodologies~Causal reasoning and diagnostics}

\keywords{uplift modeling, heterogeneous treatment effects, causal machine learning,
evaluation metrics, benchmark design, model selection, policy evaluation,
Qini coefficient}

\maketitle

\section{Introduction}
\label{sec:intro}

Every empirical benchmark ranks models by a metric, and the choice of metric quietly encodes
a \emph{scientific question} --- the accuracy of the estimated effect, the quality of the
induced ranking, or the value of the resulting decision. When the metric is mismatched to the
outcome type or to the deployment objective these questions come apart, and the ``best'' model
changes with the measure rather than with the model. This is a benchmarking and
evaluation-methodology problem, not a modeling one; it is central to how causal-ML methods are
compared and selected, and it is acute in uplift modeling (conditional-average-treatment-effect
estimation for targeting), where published benchmarks disagree on which estimator wins and no
counterfactual is observed to adjudicate.

Concretely, a retailer's data team must choose an uplift model for a campaign. They cannot
observe, for any customer, the counterfactual outcome --- the fundamental problem of causal
inference --- so they rank candidate models by a \emph{proxy} computed from experimental data,
usually the Qini coefficient or the area under the uplift curve (AUUC). The implicit assumption
is that a model scoring well on the proxy also serves the deployment goal.

\textbf{We show this assumption fails in two structurally different ways that cannot be
explained solely by a single unstable estimator or base learner.} They are not fold-level noise;
each has a distinct empirical signature and a different practical remedy, and conflating them
(as a single ``rankings are unstable'' observation would) obscures both. Throughout, we
distinguish three levels of claim --- exact identities, empirical signatures, and candidate
mechanisms --- and hold each finding to its level. The evidence base is deliberately
scoped: F1 is identified on the continuous benchmark with known effects --- the semi-synthetic
IHDP benchmark \citep{hill2011bayesian} (10 realizations in the main panel, extended to all
100 simulations) --- and F2 on Jobs \citep{lalonde1986evaluating,shalit2017estimating},
the 10 re-splits of the LaLonde job-training study; every interval is read at that
granularity. \textbf{What ``regime'' denotes here:} an \emph{observed dataset family in which the failure appears}, not a measurable property that predicts it. No ex-ante quantity we test predicts the metric-specific \emph{gap} within a family (full search, including a post-hoc cross-family observation, in Appendix~\ref{app:sepcov}), and \emph{the failing family is not the heavy-tailed one} (Section~\ref{sec:robustness}). The operational diagnostic is cross-metric stability on the user's own data (Section~\ref{sec:guide}).

\paragraph{Finding 1 (F1) — outcome-regime sensitivity of Qini.} The unnormalized
Qini coefficient sums outcomes, a construction designed for binary response
\citep{radcliffe2007using}; on continuous outcomes its reliability can become
outcome-distribution dependent. We do not claim it is categorically invalid there; we show
that on the evaluated continuous benchmark (IHDP) its ranking fails to track ground-truth quality
(Section~\ref{sec:m1}), while the mean-based area under the uplift curve (AUUC) stays
informative. This is practically relevant and potentially silent: continuous (e.g.\ revenue)
outcomes are an active uplift setting \citep{he2024rankability}, and although scikit-uplift requires a binary outcome
\citep{sklift}, the widely used causalml package \citep{chen2020causalml} computes Qini on a
continuous outcome with no warning.

\paragraph{Finding 2 (F2) — objective mismatch (the metric measures the wrong goal).} Even where ranking
metrics are well constructed and agree with one another (binary outcomes), they can diverge
from the \emph{deployment objective}. On Jobs, every ranking metric we test correlates
negatively with RCT-estimated policy value $-R_{\mathrm{policy}}$ --- equivalently, positively
with policy risk. The sharper statement is a selector contrast: Jobs carries
policy-selection signal that direct risk selection captures and the ranking metrics do not
--- metric selection is indistinguishable from random, at a $\regRelLo$--$\regRelHi\%$
regret versus the risk-selected candidate (Section~\ref{sec:m2}). This gap echoes concurrent observations
under injected structural bias \citep{yang2026structural}; we quantify it under an
outer-test-isolated\footnote{No test-fold information enters any reported number; the one disclosed deviation from strict \emph{inner} nesting involves no outcomes and affects hyperparameter selection only (Section~\ref{sec:limitations}).} protocol with per-regime uncertainty.

\paragraph{Why this matters and what is new.} This is not merely ``the wrong gold
standard'': our six-metric design includes operational objectives (policy value, policy risk,
uplift-at-$k$), and the disagreements survive them. Unlike prior work
(Section~\ref{sec:related}), UpliftBench connects production uplift proxies to reference
effect and policy objectives \emph{across distinct identification regimes} under a single
outer-test-isolated protocol --- the binary-vs-continuous Qini contrast and the regime
boundary (one of three continuous families fails, and not the heavy-tailed one) are observable only in such a
multi-regime design --- and quantifies the operational model-selection cost. The distinction
between effect estimation, ranking quality, and deployment value is a broader principle for
benchmark design: choose the metric to match the question it must answer.

\paragraph{Contributions.}
\begin{enumerate}[leftmargin=*]
\item \textbf{Benchmark specification (UpliftBench).} An outer-test-isolated benchmark of 12
  uplift estimators over 25 instances from seven dataset families, scored by six primary
  objectives (three ranking metrics, policy value, $\pehe$, and policy risk) plus a
  calibration diagnostic, under repeated stratified
  cross-validation with nested tuning and realization-level uncertainty
  (Sections~\ref{sec:data}--\ref{sec:results}).
\item \textbf{Reusable implementation and artifact.} Extensible dataset loaders, estimator and
  metric interfaces, a standardized result-parquet schema stamped with git hash and config,
  tests, and documented reproduction targets; released with a living leaderboard, and a
  DOI-bearing archival release will accompany the camera-ready (Section~\ref{sec:dnb}).
\item \textbf{Benchmark finding F1 --- outcome-regime sensitivity.} On IHDP, the
  Qini ranking shows no detectable
  alignment with effect accuracy (across all $\ihdpN$ IHDP realizations: $\ihdpQini$, 95\%
  CI $[\ihdpQiniLo,\ihdpQiniHi]$) while AUUC and uplift-at-$k$
  remain aligned; the divergence is affine-invariant, has a treated-count-weighting
  structural
  pathway (Lemma~\ref{lem:gk}), and is not attributable to a single estimator. It is not a
  tail effect, and it is not detected on either validation family --- the ACIC and
  \dataset{Revenue-Synthetic} gaps are both indistinguishable from zero
  (Section~\ref{sec:m1}, Fig.~\ref{fig:agreement}).
\item \textbf{Case-study finding F2 --- objective mismatch on Jobs.} In a within-sample Jobs
  case study, direct policy-risk
  selection yields lower benchmark regret than random model selection while Qini, AUUC,
  and uplift-at-$k$ do not ---
  a cross-repeat policy-selection regret with selection and evaluation separated across
  repetitions (Section~\ref{sec:m2}).
\item \textbf{Robustness and maintenance.} F1 replicates across all 100 IHDP realizations
  and survives estimator exclusions and implementation variants; both findings replicate
  under a second base learner (XGBoost) (Table~\ref{tab:m1sens},
  Section~\ref{sec:robustness}); the benchmark is versioned and maintained as a public
  leaderboard (Section~\ref{sec:dnb}).
\end{enumerate}

% ============================================================
\section{Related Work}
\label{sec:related}

\paragraph{Uplift / CATE estimation.} Estimator families include meta-learners
\citep{kunzel2019metalearners,nie2021quasi,kennedy2020optimal}, class-transformation methods
\citep{jaskowski2012uplift,kane2014mining}, tree-based uplift models
\citep{rzepakowski2012decision,guelman2015optimal}, and causal forests
\citep{wager2018estimation}. Surveys \citep{gutierrez2017causal,devriendt2018literature}
catalogue them without a unified evaluation.

\paragraph{Benchmarks.} \citet{devriendt2018literature} compare uplift models on marketing
data by Qini without nested CV or ground truth; \citet{devriendt2021churn} argue for uplift
over churn prediction. \citet{mahajan2023empirical} carefully study CATE
model-\emph{selection} criteria on semi-synthetic data but do not contrast the proxy metrics
used in marketing against reference objectives across regimes. IHDP is conventionally
scored by $\pehe$ \citep{hill2011bayesian,shalit2017estimating}; marketing benchmarks use
Qini/AUUC because no counterfactual exists. We connect the two and show \emph{which} proxy
fails \emph{how}. Concerns about leakage in ML evaluation are broad \citep{kapoor2023leakage};
our point is sharper --- the choice of metric, not only the protocol, can reverse conclusions.

\paragraph{Uplift evaluation metrics.} A parallel line of work scrutinizes the metrics
themselves. \citet{zhu2025rethinking} and \citet{verbeken2025odg} identify limitations of
Qini-style curves on binary outcomes and propose replacements (the Principled Uplift Curve
and pROCini); \citet{bokelmann2024improving} reduce the variance of Qini estimates on RCT
data. Closest to our work, \citet{yang2026structural} study metric \emph{stability} and model
\emph{robustness} under injected structural biases (selection, spillover, confounding) on a
single semi-synthetic family, and likewise observe that targeting and effect-estimation can be
distinct objectives and that ATE-aligned metrics rank more consistently. We differ in question and scope: rather than proposing a replacement metric or perturbing bias, we measure how metric \emph{choice} changes \emph{benchmark conclusions} across \emph{outcome regimes} and 25 instances from seven families, which isolates the binary-vs-continuous Qini contrast a continuous-only design cannot reveal. We therefore do not claim the targeting-versus-objective observation as wholly new; our contribution is its \emph{benchmarked decomposition}. Table~\ref{tab:yangdelta} itemizes the delta axis by axis.

% ============================================================
\section{Datasets}
\label{sec:data}

\begin{table*}[t]
\centering
\caption{Benchmark datasets. $n$: sample size used (\dataset{Hillstrom} is used in full; \dataset{Lenta}/\dataset{X5}/\dataset{MegaFon} are capped at 10K by stratified subsampling, full size in parentheses). $\pi_1$: treatment fraction. $\bar y$:
outcome mean (a base rate where the outcome is binary; on the outcome scale for the
continuous IHDP outcome; its 2--41 entry is the range of $\bar y$ across the 10
realizations; IHDP $n$ is the 672-unit train portion of the standard 747-unit release).
$^\dagger$Supplement, outside the 25-instance accounting (Appendix~\ref{app:criteo}).}
\label{tab:datasets}
\small
\begin{tabular}{lllrrrl}
\toprule
Dataset & Regime & Outcome & $n$ & $\pi_1$ & $\bar y$ & Reference objective \\
\midrule
\dataset{Synthetic} & Synthetic & binary & 2{,}000 & 0.50 & 0.44 & $\pehe$ \\
\dataset{IHDP} ($\times$10) & Semi-synth. & continuous & 672 & 0.19 & 2--41 & $\pehe$ \\
\dataset{Jobs} (10 splits) & Semi-synth. & binary & 2{,}570 & 0.09 & 0.85 & Policy risk \\
\dataset{Hillstrom} & Marketing RCT & binary & 64{,}000 & 0.67 & 0.15 & none (Qini proxy) \\
\dataset{Lenta} & Marketing RCT & binary & 10{,}000 (687K) & 0.75 & 0.11 & none (Qini proxy) \\
\dataset{X5} & Marketing RCT & binary & 10{,}000 (200K) & 0.50 & 0.62 & none (Qini proxy) \\
\dataset{MegaFon} & Marketing RCT & binary & 10{,}000 (600K) & 0.50 & 0.20 & none (Qini proxy) \\
\midrule
\dataset{Criteo}$^\dagger$ & Marketing RCT & binary & 10{,}000 (1M of 13.9M) & 0.85 & 0.05 & none (Qini proxy) \\
\bottomrule
\end{tabular}
\end{table*}

Table~\ref{tab:datasets} lists the datasets, which we introduce here since several are
used only by name below. \dataset{IHDP} is a semi-synthetic benchmark that pairs real
covariates from the Infant Health and Development Program with simulated potential outcomes,
so individual treatment effects are known \citep{hill2011bayesian}; we use 10 of the 100
standard simulation realizations released by \citet{shalit2017estimating}. \dataset{Jobs} combines the
randomized LaLonde employment-training experiment \citep{lalonde1986evaluating} with
observational PSID controls; policy value is evaluated on the randomized experimental subset,
and the ten benchmark units are released re-splits of that same sample
\citep{shalit2017estimating}. \dataset{Synthetic} is generated by our own documented data-generating process (DGP;
datasheet in \texttt{docs/DATASETS.md}), and \dataset{Hillstrom}, \dataset{Lenta}, \dataset{X5}, and \dataset{MegaFon} are
public marketing randomized trials with binary outcomes and no counterfactual
\citep{hillstrom2008minethatdata,lenta2022uplift,x52021uplift,megafon2021uplift}.
A \dataset{Criteo} supplement \citep{diemert2018large} ships outside this accounting ($\dagger$ in Table~\ref{tab:datasets}; Appendix~\ref{app:criteo}).

Crucially, \dataset{Synthetic} and \dataset{IHDP}
provide known simulated effect targets ($\pehe$) and \dataset{Jobs} an experimentally
identified policy-value objective (policy risk), while the four marketing RCTs have no
counterfactual. The benchmark's scope and the findings' evidence base are deliberately
distinct: UpliftBench spans 25 primary instances from seven families, but F1 is identified
on the continuous benchmark with known effects --- the 10 main IHDP realizations, replicated
across all 100 --- and F2 is evaluated on Jobs --- the only benchmark whose deployment objective
is experimentally identified.

% ============================================================
\section{Estimators, Metrics, and Protocol}
\label{sec:methods}

\paragraph{Estimators (12).} Five meta-learners (S/T/X/R/DR)
\citep{kunzel2019metalearners,nie2021quasi,kennedy2020optimal}; three class-transformation
models (ClassTrans, TwoModel, SoloModel) \citep{jaskowski2012uplift,kane2014mining}; three
uplift forests (KL/ED/$\chi^2$) \citep{rzepakowski2012decision}; and the causal forest
\citep{wager2018estimation,econml}. Meta-learners and class-transformation models share one
LightGBM base learner \citep{ke2017lightgbm} and HP search space, so differences reflect the
meta-strategy. We later swap in XGBoost to test robustness (Section~\ref{sec:robustness}).

\paragraph{Six metrics.} We compute three \emph{ranking} metrics --- Qini (unnormalized),
AUUC, and uplift-at-$k{=}0.3$; one \emph{policy} metric --- policy value at $k{=}0.3$; and,
where a reference objective exists, $\pehe$ (RMSE of the CATE) and policy risk $R_{\mathrm{policy}}$
\citep{shalit2017estimating}. Calibration is measured by expected calibration error (ECE) \citep{kuleshov2018accurate}.
Each dataset's \emph{primary} metric is what is knowable: $\pehe$ (Synthetic, IHDP), policy
risk (Jobs), Qini (marketing). \textbf{Orientation.} $\pehe$ and $R_{\mathrm{policy}}$ are
error/risk quantities (lower is better); to compare them with the ranking metrics
(higher is better) we always correlate against their negations $-\pehe$ and
$-R_{\mathrm{policy}}$, so a positive correlation means ``tracks the reference objective''
throughout ($\pehe$ is known exactly on IHDP/synthetic; Jobs policy risk is experimentally
estimated). Throughout, ``Qini'' means the canonical cumulative-gain definition below;
library implementations differ in baseline, normalization, and tie conventions, and our
claims are about this canonical object.

\paragraph{Why PolicyValue@$k$ is not in panel (a) of Fig.~\ref{fig:agreement}.} It evaluates one budgeted decision
with treatment-weighted noise, not global effect accuracy ($\rho\approx0.06$ with
$-\pehe$; no oracle reading on continuous data), so it appears only where it is meaningful
(panels (b)--(c), Section~\ref{sec:m2}).

\paragraph{Policy risk and policy value (exact definitions).} These are the two deployment
objectives, and they differ. \emph{Policy risk} follows \citet{shalit2017estimating}: a model
$\hat\tau$ induces the \emph{sign-thresholded} rule $\pi(x)=\mathbb{1}[\hat\tau(x)\ge 0]$, and
on the randomized Jobs experimental subset
\[
  \begin{aligned}
  R_{\mathrm{policy}}(\pi)&=1-\widehat{\mathbb{E}}\big[Y(\pi)\big], \\
  \widehat{\mathbb{E}}\big[Y(\pi)\big]&=\frac1{|\mathcal E|}\!\sum_{i\in\mathcal E}
        \Big[\tfrac{y_i}{\hat\pi_1}\mathbb{1}[\pi_i{=}1,t_i{=}1]\\
  &\qquad\qquad +\tfrac{y_i}{1-\hat\pi_1}\mathbb{1}[\pi_i{=}0,t_i{=}0]\Big].
  \end{aligned}
\]
with $\hat\pi_1$ the empirical treated fraction in the experimental subset $\mathcal E$ (an
inverse-propensity-weighted (IPW) estimate; lower risk is better; no further normalization). \emph{Policy value at $k$} is a
different, budgeted quantity: treat the top-$k$ fraction by score, $V_k=\tfrac1n\big(\sum_{i\in
\text{top-}k,t_i=1} y_i/\hat e_i + \sum_{i\notin\text{top-}k,t_i=0} y_i/(1-\hat e_i)\big)$ at
$k{=}0.3$. Policy risk uses a sign threshold; policy value uses a budget --- we report both.
The two use different propensity conventions by design: for Jobs policy risk we follow the
established benchmark definition of \citet{shalit2017estimating} with the experimental
treated fraction $\hat\pi_1$, whereas PolicyValue@$k$ uses the general per-dataset
implementation across all benchmarks (known propensities on the RCTs, train-fold-estimated
$\hat e_i$ on observational data).

\paragraph{Outer-test-isolated protocol.} All preprocessing, propensity estimation and tuning are
fit within the training fold only; we use stratified 3-fold CV repeated over 3 seeds
(9 evaluations/cell), tuning by random search ($B{=}10$, 2 inner folds). \textbf{The inner
tuning objective is validation Qini on every dataset} (the released default), so the
candidate panel every metric is scored on was itself selected under one of the metrics we
scrutinize; Section~\ref{sec:limitations} states what this does and does not affect.

\paragraph{Uncertainty: what is resampled.} All F1\slash F2 CIs are cluster bootstraps whose resampling unit is the \emph{benchmark realization}, so the effective sample size is the number of realizations (10 continuous, 10 Jobs) --- never the fold rows (Section~\ref{sec:results}). IHDP realizations are independent simulated draws; the 10 Jobs units are \emph{re-splits of one sample}, so their intervals can understate uncertainty, quantified by a design-effect sensitivity in Section~\ref{sec:m2noise}.

\paragraph{A note on Qini normalization.} We report the \emph{unnormalized} Qini throughout and make no claims based on the normalized variant: it divides by a perfect-curve area that is frequently non-positive on continuous outcomes, leaving the score undefined or orientation-reversed there (Appendix~\ref{app:deferred}).

Before turning to results, Table~\ref{tab:artifact} summarizes the reusable benchmark
artifact that operationalizes the analyses below.

% ============================================================
\begin{table*}[t]
\centering
\caption{The UpliftBench artifact at a glance.}
\label{tab:artifact}
\small
\begin{tabular}{ll}
\toprule
Component & UpliftBench implementation \\
\midrule
Dataset interface & Versioned loaders, fixed splits, metadata; user-data adapter \\
Estimator interface & Common fit/predict contract for 12 estimators \\
Metric interface & Ranking, effect-accuracy, policy, and calibration objectives \\
Result schema & Dataset, realization, model, fold, seed, config, git hash, status \\
Reproduction & Smoke run, artifact regeneration, full benchmark target \\
Submission format & Model card plus code or fixed-fold predictions \\
Governance & Versioned releases, independent reruns, visible timeouts \\
\bottomrule
\end{tabular}
\end{table*}

\section{Results}
\label{sec:results}

The benchmark produced 2{,}112 fold-level rows (2{,}052 successful; 60 correctly skipped:
binary-only models on IHDP's continuous outcome). At the dataset $\times$ model level, all 300 cells are accounted for (Criteo supplement outside this ledger; Appendix~\ref{app:criteo}): 228 completed, 60 are inapplicable IHDP/binary-model skips, and
12 timed out: the three uplift forests on each of \dataset{Synthetic}, \dataset{Hillstrom}
and \dataset{MegaFon} (9), DR on \dataset{Lenta}, and R and DR on \dataset{MegaFon}.
The \dataset{Synthetic} forest timeouts at $n{=}2{,}000$ are the fixed per-job budget interacting with the ${\sim}189$ tuning-loop fits per fold evaluation, not a failure at that sample size. Timeouts are marked, not imputed; skips are whole cells (a
completed cell is 9 fold rows).

\subsection{Finding 1: outcome-regime sensitivity of Qini}
\label{sec:m1}

\begin{figure*}[t]
  \centering
  \includegraphics[width=\linewidth]{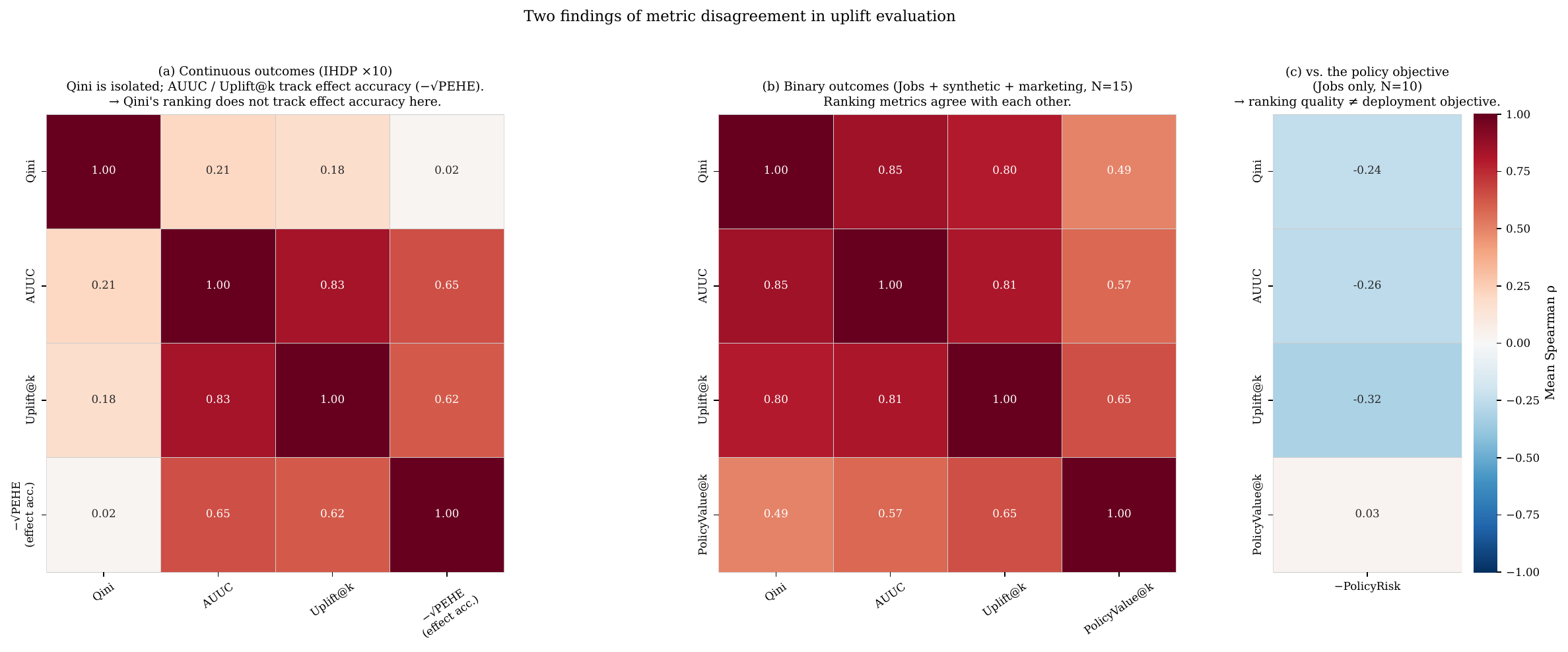}
  \Description{Three heatmap panels of mean cross-metric rank agreement: on continuous outcomes Qini is isolated while AUUC and uplift-at-k track effect accuracy; on binary outcomes the ranking metrics agree with each other; and on Jobs alone every ranking metric diverges from the identified policy objective.}
  \caption{\textbf{Two findings of metric disagreement.} Mean \emph{across dataset
  instances} of the within-instance Spearman $\rho$ computed across models (not across folds).
  Each panel is computed on a single, stated instance set: \textbf{(a)} the 10 continuous
  instances (IHDP $\times$10; \dataset{Synthetic} is generated with a binary outcome and is
  therefore grouped with the binary instances in (b)) --- Qini is isolated, agreeing neither
  with ground truth
  ($-\pehe$, $\rho=-0.15$) nor with sibling ranking metrics AUUC/uplift-at-$k$
  ($\rho\approx0$), which themselves correlate substantially with $-\pehe$
  ($\rho\approx0.6$); \textbf{(b)} the 15 binary instances (10 Jobs $+$ \dataset{Synthetic}
  $+$ 4 marketing) ---
  ranking metrics agree tightly ($\rho\approx0.8$--$0.9$); \textbf{(c)} the 10 Jobs
  splits, the only binary instances where the deployment objective is identified ---
  every ranking metric diverges from it ($\rho\approx-0.25$ with policy value
  $-R_{\mathrm{policy}}$; equivalently, positively correlated with policy \emph{risk}). The
  reference objectives are oriented higher-is-better as $-\pehe$ and $-R_{\mathrm{policy}}$,
  so a positive $\rho$ means ``tracks the objective.''}
  \label{fig:agreement}
\end{figure*}

The failure appears on the evaluated continuous benchmark (IHDP)
(Fig.~\ref{fig:agreement}a). On the primary continuous panel (the 10 IHDP splits and the
six estimators applicable to continuous outcomes --- binary-only models are skipped by
design), the Qini ranking shows no detectable alignment with effect accuracy
($\rho=-0.15$ vs.\ $-\pehe$; $+0.02$ vs.\ AUUC) while AUUC and uplift-at-$k$ correlate
substantially with $-\pehe$ ($+0.68$ and $+0.69$). Tuning off gives a paired gap of $\notuneDelta$ [$\notuneDeltaLo,\notuneDeltaHi$]; inverting the inner objective to AUUC gives $\atDelta$ [$\atDeltaLo,\atDeltaHi$] (Appendix~\ref{app:notune}) --- F1 is an artifact of neither metric's selection. Nor is the six-model rank correlation doing the work: as pairwise concordance over $\concN$ model-pair comparisons across all 100 realizations, Qini orders pairs at $\concQini$ [$\concQiniLo,\concQiniHi$] --- a coin flip --- against AUUC's $\concAuuc$ [$\concAuucLo,\concAuucHi$]. On the
extended common panel --- all $\ihdpN$ IHDP realizations --- Qini's mean rank correlation
with effect accuracy is $\ihdpQini$, 95\% CI $[\ihdpQiniLo,\ihdpQiniHi]$, against a paired
AUUC-over-Qini gap of $\ihdpDelta$ [$\ihdpDeltaLo,\ihdpDeltaHi$]. Within the evaluated
benchmark panel, the observed isolation is specific to Qini rather than a generic
ranking-versus-estimation gap.

\paragraph{F1 is not a tail effect.} The regime that \emph{indexes} F1 is not a mechanism for it: the panel where F1 appears is \emph{mild}-tailed, and on a far heavier-tailed family Qini tracks effect accuracy (Section~\ref{sec:robustness}). In short: \emph{Qini is uninformative on IHDP, not because of heavy tails, and we cannot yet say why}.

\paragraph{Definitions and identities.} Sort units by descending predicted uplift
(deterministic tie-breaking), with cumulative treated/control counts $T_k,C_k$ at depth $k$.
Qini integrates the count-corrected cumulative \emph{sum}
$g(k)=\sum_{i\le k} y_i t_i-\big(\sum_{i\le k} y_i(1-t_i)\big)T_k/C_k$ against a chord
baseline; our AUUC --- the \emph{prefix-mean} AUUC; ``AUUC'' unqualified always means this variant --- integrates the difference of cumulative \emph{means} $u(k)$ minus the ATE triangle (a \emph{shifted} convention: a random ranking scores ${\approx}\mathrm{ATE}/2$, identical for every model on a fold, so it cancels from every reported statistic); both use the population-fraction axis and we report raw (unnormalized) areas (edge cases in Appendix~\ref{app:m1detail}); the shipped \texttt{causalml}\slash\texttt{scikit-uplift} AUUC instead integrates the cumulative-gain curve $k\,u(k)$ --- the \emph{cumulative-gain} AUUC of Section~\ref{sec:robustness}. The two are linked exactly:
\begin{lemma}[Qini is treated-count-weighted AUUC]\label{lem:gk}
On any prefix where both arms are present, $g(k)=T_k\,u(k)$.
\end{lemma}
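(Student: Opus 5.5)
The identity is a direct algebraic unpacking of the two definitions, so I will proceed by writing both curves in terms of the same prefix sums and factoring. First, fix the prefix notation. On the top-$k$ prefix, let $S^1_k=\sum_{i\le k} y_i t_i$ and $S^0_k=\sum_{i\le k} y_i(1-t_i)$ denote the treated and control outcome sums, with $T_k=\sum_{i\le k}t_i$ and $C_k=k-T_k$. The hypothesis that both arms are present is exactly $T_k>0$ and $C_k>0$, so every ratio below is well defined.

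Next I pin down $u(k)$. The prefix-mean uplift is the difference of cumulative arm means,
\[
u(k)=\frac{S^1_k}{T_k}-\frac{S^0_k}{C_k}.
\]
This is the quantity that AUUC integrates, before the ATE-triangle shift and the population-fraction axis are applied. Both of those are operations on the \emph{integrated} curve, not on the pointwise $u(k)$, so they do not affect a pointwise identity. I will state this explicitly, because the lemma concerns the curves, not the areas.

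The computation itself is a single step:
\[
T_k\,u(k)=S^1_k-\frac{T_k}{C_k}\,S^0_k=\sum_{i\le k}y_it_i-\Big(\sum_{i\le k}y_i(1-t_i)\Big)\frac{T_k}{C_k}=g(k).
\]

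The only real obstacle is definitional bookkeeping rather than mathematics. I need to make sure that the Qini curve $g$ is the count-corrected sum exactly as written, before its chord baseline is subtracted. I also need to confirm that deterministic tie-breaking makes the prefix, and hence $T_k$ and $C_k$, well defined and identical for the two curves. Once both curves are evaluated on the same sorted order, the identity holds unit by unit. I will finish by noting its reading: Qini weights the prefix-mean uplift by the treated count $T_k$. Since $T_k$ is random and fold-dependent rather than proportional to $k$, this weighting is the structural pathway by which the two metrics can order models differently. Prefixes where one arm is empty are left to the edge-case conventions of Appendix~\ref{app:m1detail}.
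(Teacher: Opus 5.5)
Your proposal is correct and is the paper's proof: expand $u(k)$ as the difference of the treated and control prefix means, multiply through by $T_k$, and read off $g(k)$, with the both-arms-present hypothesis ensuring $T_k, C_k > 0$. Your extra remarks are sound but go beyond what the one-line computation needs: the ATE-triangle shift and chord baseline act only on the integrated curves, and deterministic tie-breaking fixes a common prefix.
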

\begin{lemma}[Score-level relationship]\label{lem:baseline}
With baseline-subtracted residuals $q(k)=g(k)-\tfrac{k}{n}g(n)$ and
$a(k)=u(k)-\tfrac{k}{n}u(n)$,
\[ q(k)=T_k\,a(k)+\tfrac{k}{n}\,u(n)\,(T_k-T_n). \]
\end{lemma}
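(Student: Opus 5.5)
Lemma~\ref{lem:baseline} follows by substituting Lemma~\ref{lem:gk} into the definition of $q(k)$ and regrouping. We apply Lemma~\ref{lem:gk} twice: once at depth $k$ and once at the full depth $n$. Both arms are present at depth $n$ whenever the fold contains treated and control units, so $g(k)=T_k\,u(k)$ and $g(n)=T_n\,u(n)$. Then
\[
q(k)=T_k\,u(k)-\tfrac{k}{n}\,T_n\,u(n).
\]

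Next we rewrite $u(k)$ through its own baseline residual, $u(k)=a(k)+\tfrac{k}{n}u(n)$, which gives
\[
q(k)=T_k\,a(k)+\tfrac{k}{n}\,T_k\,u(n)-\tfrac{k}{n}\,T_n\,u(n)=T_k\,a(k)+\tfrac{k}{n}\,u(n)\,(T_k-T_n).
\]
This is exactly the claimed identity, so the computation is two lines of algebra once Lemma~\ref{lem:gk} is in hand.

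The only point needing care is the domain of validity. The identity holds on prefixes where both arms are present, inherited from Lemma~\ref{lem:gk}, together with the full sample. On early prefixes with $C_k=0$ (or $T_k=0$), $g$ and $u$ must be given the conventions fixed in Appendix~\ref{app:m1detail}. I would state the lemma under the same proviso as Lemma~\ref{lem:gk}.

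Should Lemma~\ref{lem:gk} itself need verification, the step is direct. Write $u(k)=\bar y^{1}_k-\bar y^{0}_k$, the difference of the cumulative treated and control means. Then
\[
T_k\,u(k)=\textstyle\sum_{i\le k}y_it_i-\tfrac{T_k}{C_k}\sum_{i\le k}y_i(1-t_i)=g(k).
\]
I expect no real obstacle. The only hazard is bookkeeping: making sure the same population-fraction axis $k/n$ is used in both baselines, so that the chord terms line up.
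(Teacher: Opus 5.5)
Your proof is correct and is the same argument the paper gives. Apply Lemma~\ref{lem:gk} at depths $k$ and $n$ to get $q(k)=T_k\,u(k)-\tfrac{k}{n}\,T_n\,u(n)$, then add and subtract $\tfrac{k}{n}\,T_k\,u(n)$ to regroup it as $T_k\,a(k)+\tfrac{k}{n}\,u(n)\,(T_k-T_n)$.
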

Thus Qini introduces treated-count depth weighting and an additional treatment-interleaving
term relative to AUUC; these identities provide a structural pathway, not a complete
explanation of the observed IHDP separation (proofs in Appendix~\ref{app:m1detail}).

\paragraph{Scale ruled out; the failure in one construction.} Positive affine
transformations preserve every within-split Qini, AUUC, and uplift-at-$k$ model ranking,
ruling out scale and location as explanations (Prop.~\ref{prop:m1} and empirical
confirmation, Appendix~\ref{app:m1detail}). The distribution's \emph{shape}, however, can
bite: on a hand-checkable 12-unit construction, a single large control outcome ($y{=}40$)
makes Qini prefer a near-random model over a near-true one while AUUC and $-\pehe$ prefer
the better model, and deleting that one outcome flips Qini back
(Table~\ref{tab:cxvectors}).

\paragraph{A controlled test, and an honest boundary.}\label{sec:m1boundary} Holding models and latent CATE fixed and varying only the outcome distribution (Fig.~\ref{fig:transform}), affine transforms change nothing and rising kurtosis degrades \emph{all} cumulative ranking metrics --- Qini and AUUC \emph{together} --- so the controlled experiment does not reproduce Qini's isolation, and candidate mechanisms remain open (Appendix~\ref{app:m1detail}).

The clearest symptom is the \model{DR-Learner} on IHDP: it tops Qini ($\drQiniSeight$ on s8) yet is ranked worst by $\pehe$, which diverges into the tens of thousands (max $\drPeheMax$, one fold of the main run's s3 cell; the per-realization cells of Fig.~\ref{fig:leaderboard} average nine folds and sit lower, s3 cell mean $\approx2.9{\times}10^{4}$) where EconML's augmented-IPW (AIPW) nuisance estimation degenerates --- a known numerical risk of inverse-propensity-weighted pseudo-outcomes under extreme estimated propensities \citep[cf.][]{kennedy2020optimal}. Because F1 is rank-based this magnitude is irrelevant, and F1 survives excluding DR (and DR$+$R) entirely (Appendix~\ref{app:m1detail}).

\subsection{Finding 2: ranking metrics are insufficient for level-dependent deployment objectives}
\label{sec:m2}

Figure~\ref{fig:agreement}(b)--(c) illustrates F2, which has two parts: a \emph{structural}
insufficiency of ranking metrics for level-dependent objectives, and its operational
magnitude on Jobs. On binary outcomes Qini behaves like a normal ranking metric --- it
agrees with AUUC at $\rho=0.90$ (Jobs) and $0.72$ (marketing) --- yet it does not recover
the \emph{policy} objective on Jobs, measured by the RCT-estimated policy risk (an IPW value
on the randomized experimental subset; Section~\ref{sec:methods}).

\begin{proposition}[Rank-invariance boundary]\label{prop:rankinv}
Any evaluation metric that depends on $\hat\tau$ only through the induced ranking of units is
invariant to every strictly increasing transformation of $\hat\tau$, including shifts
$\hat\tau\mapsto\hat\tau+c$. The sign-threshold policy $\pi(x)=\mathbb{1}[\hat\tau(x)\ge 0]$
is not: a shift changes which units are treated. Hence for two models whose scores share a
ranking but differ by a shift, ranking metrics assign identical values while their sign
policies --- and the resulting policy values --- can differ. Ranking metrics therefore cannot,
from rank information alone, identify the sign-threshold-optimal model; that requires the
score \emph{level} (calibration or an explicit threshold).
\end{proposition}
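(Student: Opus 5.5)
The argument has three parts: the invariance claim, a construction showing the sign policy is not invariant, and the non-identifiability conclusion.

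\textbf{Invariance.} Formalize ``depends on $\hat\tau$ only through the induced ranking'' as follows. There is a map $M$ with $M(\hat\tau)=F(\sigma_{\hat\tau},\mathcal D)$, where $\sigma_{\hat\tau}$ is the ordering of units by descending $\hat\tau$ under the fixed deterministic tie-breaking rule and $\mathcal D$ is the evaluation data. Let $h$ be strictly increasing. Then $\hat\tau(x_i)>\hat\tau(x_j)$ holds if and only if $h(\hat\tau(x_i))>h(\hat\tau(x_j))$, and equalities are preserved. Hence $\sigma_{h\circ\hat\tau}=\sigma_{\hat\tau}$, provided the tie-breaking rule depends only on unit indices and tie classes, which we state as an assumption. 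It follows that $M(h\circ\hat\tau)=M(\hat\tau)$. Shifts are the special case $h(s)=s+c$. We also note that Qini, AUUC (both variants) and uplift-at-$k$ are instances of this class, since each is a functional of $(y_i,t_i)$ read in sorted order via $T_k,C_k,g(k),u(k)$. PolicyValue@$k$ is also in the class, as it depends on the top-$k$ set only.

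\textbf{Non-invariance of the sign policy.} Give an explicit construction. Take any $\hat\tau$ whose values on the evaluation set are not all of one sign, or more simply any $\hat\tau$ with some unit $i$ having $\hat\tau(x_i)\ge 0$. Choose $c<-\max_i\hat\tau(x_i)$. Then $\pi_{\hat\tau+c}\equiv 0$, while $\pi_{\hat\tau}$ treats unit $i$, so the treated sets differ.

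Next we must show the policy values can actually differ, not merely the treated sets. Plug both policies into the IPW formula for $\widehat{\mathbb E}[Y(\pi)]$. The difference is
\[\tfrac{1}{|\mathcal E|}\sum_{i\in S}\Big[\tfrac{y_it_i}{\hat\pi_1}-\tfrac{y_i(1-t_i)}{1-\hat\pi_1}\Big],\]
where $S$ is the set of units whose treatment flips. Any dataset in which this sum is nonzero suffices, for example a single flipped treated unit with $y_i=1$. A two-unit toy example with one treated unit ($y=1$) and one control unit ($y=0$) makes this explicit.

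\textbf{Non-identifiability.} This is the step that needs the most care in phrasing. Suppose a rank-based selector $M$ had to pick the risk-minimizing model among $\{\hat\tau,\hat\tau+c\}$. Since $M$ assigns the two models equal values, any rule based on $M$ alone treats them symmetrically. By the construction above we can arrange for either one to be strictly better: swap the roles by choosing $c$ of either sign, or flip the outcome of the flipped unit. Therefore no function of rank information selects the sign-threshold-optimal model on all instances. Recovering the optimum requires information not in $\sigma_{\hat\tau}$, namely the score level or an explicit threshold.

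The main obstacle is stating the ``rank-only'' hypothesis precisely enough that tie-breaking conventions and the chord and ATE baselines (which depend only on data) are covered. I would handle this by fixing the tie-breaking rule as part of the metric's definition.
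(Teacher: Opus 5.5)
Your proposal is correct and follows the same argument the paper gives inline in the proposition itself: strictly increasing maps preserve the ranking, a shift across zero changes the treated set and hence the IPW policy value, so rank-identical models can have different sign-policy values; your explicit construction and the role-swap non-identifiability step make that sketch rigorous. For the final step, rely on the role swap, i.e.\ presenting the pair as $(\hat\tau,\hat\tau+c)$ versus $(\hat\tau+c,\hat\tau)$ on the same data, rather than on ``flip the outcome of the flipped unit''. With binary outcomes, setting a treated unit's $y_i$ from $1$ to $0$ only makes the IPW difference vanish rather than reverse, and changing the data also changes the input to the rank-based rule.
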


\noindent This is a \emph{non-identifiability} result, not a claim that ranking metrics
carry no information in practice --- real estimators do not differ by pure shifts --- so both
halves are load-bearing: the proposition gives insufficiency in principle, and Jobs supplies
the empirical claim that here the signal is not transmitted. The headline is a selector contrast: \emph{within the released split rotation, direct policy-risk selection lowers Jobs benchmark regret versus random; Qini, AUUC, and uplift-at-$k$ do not.}

\paragraph{On Jobs, the benchmark contains detectable policy-selection signal that the
evaluated ranking metrics do not transmit.} Under a cross-repeat
rotation (select on two repeated-CV seeds, evaluate on the held-out seed; cluster bootstrap
over the 10 Jobs splits), selecting by \emph{cross-fitted policy risk itself} reduced
benchmark risk relative to random selection by $\selRiskGain$ (95\% CI
$[\selRiskGainLo,\selRiskGainHi]$, excluding
zero), whereas the Qini, AUUC, and uplift-at-$k$ selectors changed it by $\selQiniGain$,
$\selAuucGain$, and $\selUpliftGain$ respectively --- none distinguishable from zero
(ladder details in Appendix~\ref{app:extrobust}).

\paragraph{The operational cost: policy-selection regret.} With both the metric winner and
the risk-minimizing reference chosen on the selection seeds and evaluated on the held-out
seed (policy $\pi(x){=}1$ iff $\hat\tau(x){\ge}0$), metric selection incurs mean regret
$\regLo$--$\regHi$ risk --- $\regRelLo$--$\regRelHi\%$ of the best selected-candidate risk
--- and is comparable to the random-selection baseline (paired differences' CIs span zero).
Because repeated-CV seeds re-partition the \emph{same} observations, this is cross-repeat
benchmark-selection regret, not evaluation on a new sample; secondary statistics
(within-margin rates, paired differences per metric) are in Appendix~\ref{app:extrobust}.

\paragraph{Calibrating the threshold closes most of the gap.} As Proposition~\ref{prop:rankinv} predicts, replacing the fixed zero threshold with one calibrated on the selection folds cuts the Qini-selection regret by $\thrClosed\%$ (to $\thrRegCalib$, CI no longer excluding zero): F2 on the sign-threshold objective is largely a score-\emph{level} effect, so pair a ranking metric with a calibrated threshold (Appendix~\ref{app:deferred}).

\paragraph{How strong is this evidence?}\label{sec:m2noise} The Jobs instances re-split one sample and the IPW risk is noisy (SE $\approx\riskUnitSeMed$); the regret interval excludes zero only for between-split dependence $\bar\rho\le\regRhoStarQini$, and the splits' per-model risk vectors already correlate at $\jobsRiskCorr$ across the 10 splits (a raw correlation reflecting both between-model signal and split dependence, not itself $\bar\rho$ of the differenced statistic). We therefore read the regret as \emph{modest} and rest F2 on its convergent parts --- the structural boundary (Prop.~\ref{prop:rankinv}), the selector contrast, and threshold calibration (Appendix~\ref{app:extrobust}).

\begin{center}
\fbox{\parbox{0.92\linewidth}{\centering \textbf{Takeaway (F2).} \emph{Jobs contains usable
policy-selection signal, but the common ranking metrics fail to capture it:} direct
policy-risk selection yields lower benchmark regret than random within the released
split rotation, whereas Qini, AUUC, and uplift-at-$k$ do not; their
selected policies incur a $\regRelLo$--$\regRelHi\%$ regret relative to the risk-selected
candidate. For sign-threshold policies the mismatch is structural --- rank-only metrics
discard the score level --- and \textbf{calibrating the threshold removes $\thrClosed\%$ of
the Qini-selection regret}, which is the paper's most directly actionable recommendation; the gap
disappears when model selection and evaluation are aligned to budgeted policy value.}}
\end{center}

\paragraph{Which model wins depends on regime and metric.} No estimator dominates: simple meta-learners and the causal forest lead the semi-synthetic regimes, simple estimators lead the marketing RCTs (per-dataset winners and critical-difference diagrams in Appendix~\ref{app:deferred}). These standings are protocol-scoped --- samples in the hundreds, 3-fold CV, and a $B{=}10$ tuning budget can under-serve nuisance-heavy learners (DR/R) --- not universal estimator verdicts (Section~\ref{sec:guide}, Step~3).

\subsection{Robustness and boundaries}
\label{sec:robustness}

\subsubsection{Robustness of F1}

\paragraph{Paired, decomposed statistics.} We do not rest on a pooled Qini-vs-reference
correlation: its CI crosses zero, and pooling outcome regimes conflates the two findings (Appendix~\ref{app:extrobust}). The load-bearing statistic is the paired AUUC-over-Qini gap on the continuous panel, $\Delta=\relDelta$ [$\relDeltaLo,\relDeltaHi$], positive in all six exclusion\slash base-learner cells and with its CI excluding zero in five (Table~\ref{tab:m1sens}).

\paragraph{Uncertainty, and all 100 realizations.} On the 10 main splits the paired gap is positive on $\relPos$/10 (Qini $\bTwoQini$ [$\bTwoQiniLo,\bTwoQiniHi$]). Extending to
\emph{all} $\ihdpN$ IHDP realizations (split indices fixed before analysis; meta/forest
learners on the added splits) strengthens the evidence: Qini's correlation with effect
accuracy is $\ihdpQini$ [$\ihdpQiniLo,\ihdpQiniHi$] while AUUC and uplift-at-$k$ track it
at $\ihdpAuuc$ and $\ihdpUplift$; the paired gap is $\ihdpDelta$
[$\ihdpDeltaLo,\ihdpDeltaHi$], positive on $\ihdpDeltaPos$/$\ihdpN$ realizations
(run documentation in Table~\ref{tab:ihdp100doc}). Extending from the 10-split primary panel
to all $\ihdpN$ realizations lowers the point estimate ($\bTwoDelta$ to $\ihdpDelta$) and
tightens the interval, so the primary panel did not understate the gap. With six common
estimators in the extension, each per-realization Spearman is discrete and coarse; inference therefore concerns the mean paired rank statistic across realizations, not fine-grained estimation within any single one.

\paragraph{Estimator exclusion.} Excluding the DR-Learner, and both DR- and R-Learners, keeps Qini's correlation with ground truth indistinguishable from zero in every configuration and the paired $\Delta$ positive in all six (base learner $\times$ exclusion) cells, its CI excluding zero in five (Table~\ref{tab:m1sens}); AUUC's absolute alignment falls as estimators are removed, so we state F1 relatively (AUUC consistently more aligned than Qini), not as a fixed level. $\Delta$ declines ($+0.63\rightarrow+0.40\rightarrow+0.42$; $+0.69\rightarrow+0.44\rightarrow+0.34$) and the decline runs entirely through AUUC's alignment, so part of the advantage is AUUC penalizing the unstable learners' $\pehe$ blow-ups, to which a rank-only statistic is indifferent: \textbf{the advantage attenuates but does not vanish} --- on all $\ihdpN$ realizations the excluded-panel gaps are $\eHundDeltaExDR$ [$\eHundDeltaExDRLo,\eHundDeltaExDRHi$] and $\eHundDeltaExDRR$ [$\eHundDeltaExDRRLo,\eHundDeltaExDRRHi$], CIs excluding zero.

\paragraph{Base learner.} Switching to XGBoost (Fig.~\ref{fig:robust}) leaves the F1
signature intact (Qini$\perp\pehe$, AUUC$\parallel\pehe$) and preserves the ranking
metrics' mutual agreement on binary outcomes. The conclusions are about \emph{metrics},
not a LightGBM artifact.

\paragraph{Implementation variants.} Scoring identical predictions with
\texttt{causalml}'s shipped Qini yields nearly identical rankings (rank $\rho$
$\cmlRankRho$; winner agreement $\cmlWinnerAgree\%$) and the same F1 result. Correcting the R-Learner\slash Causal Forest treatment nuisance to a classifier (an EconML requirement; a regressor through v1.4.0, caught by an external audit) and rerunning the full benchmark leaves F1 intact: corrected gap $\nfixGap$ [$\nfixGapLo,\nfixGapHi$] vs.\ $\nfixOldGap$ [$\nfixOldGapLo,\nfixOldGapHi$] pre-fix, Causal Forest cells unchanged ($\rho=\nfixCfCorr$; Appendix~\ref{app:extrobust}).

\subsubsection{Boundary of F1}

\paragraph{Two boundary-case validation families.} ACIC 2016 \citep{dorie2019automated} supplies $\acicN$ instances with real covariates and known effects. There Qini behaves normally ($\acicQiniMOne$ [$\acicQiniMOneLo,\acicQiniMOneHi$] vs $-\pehe$), AUUC gives $\acicAuucMOne$ [$\acicAuucMOneLo,\acicAuucMOneHi$], and the paired AUUC-over-Qini gap --- the statistic F1 is defined by --- is $\acicGap$ [$\acicGapLo,\acicGapHi$], CI covering zero; its outcome kurtosis (${\approx}0.4$) is similarly mild.

\paragraph{A third continuous family ends the tail explanation.} We add a third family with known effects and a response surface unlike IHDP's (\dataset{Revenue-Synthetic}: lognormal spend, \emph{multiplicative} effect; $\rsN$ realizations, same protocol). \textbf{F1 does not replicate}: Qini tracks effect accuracy ($\rsQini$ [$\rsQiniLo,\rsQiniHi$]) and the paired gap vanishes, $\rsDelta$ [$\rsDeltaLo,\rsDeltaHi$]. The family that fails F1 is the least heavy-tailed of the three (medians in Sec.~\ref{sec:m1}), so heavy tails are neither necessary nor sufficient. F1 appears on one of three evaluated continuous families: Qini is \emph{sometimes} uninformative and \emph{sometimes} the best of the ranking metrics, spanning $+0.02$ to $+0.48$; no measured property yet predicts which case a practitioner is in (Appendix~\ref{app:sepcov}) --- which is why Qini cannot be trusted unvalidated (Appendix~\ref{app:deferred}). This is consistent with \citet{curth2021really}, who argue IHDP is idiosyncratic; we add that the idiosyncrasy is \emph{metric-facing}, and that no proposed property yet predicts the metric-specific gap ex ante --- the one surviving candidate is the lemma-derived composite of Appendix~\ref{app:sepcov}, pending its controlled sweep.

\paragraph{Mechanism probes, summarized.} RATE and oracle residualization improve Qini only modestly, and controlled sweeps of kurtosis, imbalance, and correlated score error do not isolate the separation. A weighting decomposition localizes it: the field's shipped cumulative-gain AUUC (checked against \texttt{causalml} on all $\gainCmlFolds$ folds, $\gainCmlRho$) tracks $-\pehe$ at $\gainAuuc$ [$\gainAuucLo,\gainAuucHi$] on identical predictions over all $\gainAuucN$ realizations --- better than our prefix-mean $\meanAuucRho$ --- while Qini sits at ${\approx}0$: among these functionals the discrepancy localizes to \emph{treated-count}, not depth, weighting (Lemma~\ref{lem:baseline}'s interleaving term). Mechanism otherwise open (values in Appendix~\ref{app:deferred}).

\subsubsection{Robustness and scope of F2}

\paragraph{Base-learner replication.} Under XGBoost, all three ranking metrics correlate
negatively with policy value (Qini $\xregQiniRho$ [$\xregQiniRhoLo,\xregQiniRhoHi$], there
\emph{not} borderline) and the selection regret stays positive ($\xregLo$--$\xregHi$);
Table~\ref{tab:regretaudit} audits both runs side by side.

\paragraph{Budget grid.} The F2 selection regret is positive with the CI excluding zero at
\emph{every} budget $k\in\{0.1,0.2,0.3,0.5\}$ and under the area-under-curve selector;
within-dataset rankings are moderately stable across budgets (mean pairwise Spearman
$\kgridStab$; per-budget values in Appendix~\ref{app:extrobust}).

\paragraph{Objective specificity.} The F2 gap is objective-specific: direct risk selection
captures signal for the sign-threshold policy, while ranking metrics transmit signal when
the reference objective is budgeted policy value. With policy value at a fixed budget as both reference and objective, every selector beats random and metric-vs-reference regrets are indistinguishable from zero at every budget (Appendix~\ref{app:extrobust}) --- which is exactly why the metric must match the intended deployment rule.

\subsection{Calibration: an identification check, not a third finding}
\label{sec:calibration}
Mis-calibration is \emph{not} a third finding: the diff-in-means reliability estimator behind uplift-ECE is unbiased only under randomized assignment, and once the analysis is stratified by identification regime the pooled Qini-vs-ECE disagreement is revealed as an identification artifact --- where ECE is identified, calibration does \emph{not} disagree with Qini. The full stratified analysis, figure, and tables are in Appendix~\ref{app:calibfull}.

\section{Practitioner's Guide: Choose a Metric, Then a Model}\label{sec:guide}
Match the metric to the \emph{outcome type} (on continuous outcomes avoid the unnormalized Qini as the sole criterion; prefer $\pehe$ where identifiable, and compare ranking metrics against each other otherwise) and to the \emph{deployment objective} (evaluate policy value at the operating budget, and assess calibration with an identified estimator when allocation magnitude matters), then validate rankings on more than one dataset. For sign-threshold deployment, pair the ranking metric with a threshold calibrated on held-out selection data (Section~\ref{sec:m2}). On continuous outcomes, bootstrap independent evaluation units (or benchmark realizations where available), compare Qini and AUUC rankings, and treat the winner as unstable where they diverge. The full step-by-step guide is in Appendix~\ref{app:guide}.

% ============================================================
\section{Limitations}
\label{sec:limitations}

\paragraph{No ground truth on marketing data.} We establish F1 on semi-synthetic benchmarks
with known effects, and F2 using
experimentally identified policy value on Jobs; we extrapolate the result only as a caution
for marketing applications; we cannot establish whether Qini misranks on Lenta. The failure
appears on \emph{one of the three} evaluated continuous families with reference effects
(IHDP); across the three the paired gap forms a gradient ($\ihdpDelta$, $\acicGap$, $\rsDelta$)
rather than a binary split; F1 should be read
as a demonstrated failure case on a standard benchmark, not as a property of continuous
outcomes in general.

\paragraph{Protocol limitations.} Three imperfections in the released protocol bound how F1/F2 should be read, and each is disclosed in full in Appendix~\ref{app:deferred}. (i) \emph{The candidate panel is Qini-tuned}, conditioning the candidate set (all metrics score identical out-of-fold predictions); the finding persists with tuning off ($\notuneDelta$) and with the inner objective switched to AUUC ($\atDelta$, Appendix~\ref{app:notune}). (ii) \emph{Propensity nesting is imperfect inside the tuning loop}: on IHDP\slash Jobs the propensity model is fit on the whole outer-training fold and sliced for the inner folds --- a covariate\slash treatment-only deviation (never outcomes) affecting hyperparameter selection only. (iii) \emph{F2's empirical half is within-sample}: selection, calibration, and evaluation draw on the same Jobs observations --- a descriptive case study whose intervals describe benchmark-split variability, not population sampling error. Fixes for (ii) and (iii) are planned follow-up work.

\paragraph{Scale and compute.} Lenta\slash X5\slash MegaFon are capped at 10K by documented stratified subsampling; \dataset{Criteo} (14M) is outside the headline results; a v1.4.3 supplement adds its leaderboard --- the 1M tier subsampled to the released 10K cap for comparability (Appendix~\ref{app:criteo}; binary-regime agreement replicates, Qini--AUUC $+0.74$; enters neither F1 nor F2) --- and a 100K probe: resolution is cap-limited --- 10K gaps are small relative to fold noise and the 10K and 100K orderings did not agree (${\approx}0$ rank correlation); capped leaderboards are subsample-scoped. A handful of uplift-forest and R\slash DR cells timed out (marked, not imputed); none affects F1\slash F2, whose load-bearing cells have no timeouts (correlations complete-case). A full-scale sweep remains versioned future work.

\paragraph{Scope of F1.} We do not claim Qini is categorically invalid on continuous outcomes: the three families form a gradient in the paired gap ($\ihdpDelta$, $\acicGap$, $\rsDelta$), not a binary split, and Qini tracks effect accuracy on both validation families ($\acicQiniMOne$, $\rsQini$). We establish that on the standard continuous benchmark the Qini ranking is effectively unrelated to effect accuracy while AUUC and uplift-at-$k$ stay informative, and report it because practitioners and libraries apply Qini to continuous outcomes and the failure is silent --- and regime-dependent, which is what makes it dangerous.

% ============================================================
\section{Reproducibility}
\label{sec:repro}

The public reproducibility package --- UpliftBench (code, configs, data loaders, and result parquets) --- is released at \url{https://github.com/binshuangli/uplift-bench}; a DOI-bearing archival release will accompany any archival version.
Dedicated analysis targets regenerate all reported tables, figures, and numerical macros
from the committed result artifacts. The core benchmark runs are driven by \texttt{make
repro} (smoke path \texttt{make repro-smoke}); the prediction-level auxiliary analyses (the
causalml variant, threshold calibration, and the F2 robustness checks) are reproduced through
the documented \texttt{make repro-r4pred} target. Every result parquet is stamped with git hash and config; seeds are globally controlled. Settings: 3 folds $\times$ 3 seeds, $B{=}10$ tuning configs, 2 inner folds; base learners LightGBM and XGBoost. Each analysis regenerates from the raw result rows via a dedicated script under \texttt{scripts/} (per-artifact mapping in the repository README; artifact inventory in Table~\ref{tab:artifact}).

\paragraph{Generative-AI assistance.} An AI coding assistant (Claude Code) assisted with code, documentation, manuscript drafting and editing, and brainstorming candidate analyses. The author independently selected the research questions, specifications, methods, and analyses; reviewed, executed, and validated all generated code and outputs against the committed artifacts; and takes responsibility for all results and claims.

% ============================================================
% ============================================================
\section{Benchmark Documentation, Availability, and Ethics}
\label{sec:dnb}

\paragraph{Provenance and curation.} Dataset provenance is specified in
Section~\ref{sec:data} and per-family datasheets ship in \texttt{docs/DATASETS.md}
(source, licence, preprocessing, caveats); no new human-subjects data is
introduced. The two \emph{auxiliary} continuous families (\dataset{Revenue-Synthetic};
\dataset{ACIC 2016}) are regenerated
deterministically and serve as F1's boundary cases (Section~\ref{sec:robustness}). Every
dataset loads through a versioned loader recording size, treatment fraction, outcome
type, and preprocessing; no full-dataset statistic precedes the train/test split.

\paragraph{Availability, licensing, and maintenance.} Code, configs, loaders, result
parquets, and the scripts regenerating every figure and table are MIT-licensed; each
dataset keeps its original license and is fetched by its loader from the original host,
never redistributed (\dataset{Criteo} under Criteo Research's research-use terms).
The repository is public; an archival release with a Zenodo DOI will
accompany the conference version.

\paragraph{Bring your own data.} A documented adapter
(\texttt{UpliftDataset.\allowbreak from\_frame}) and \texttt{register\_loader} run every
estimator and metric on user data under the released protocol.

\paragraph{Leaderboard governance.} Submissions are pull requests adding a model card
plus estimator code or per-fold predictions on the \emph{fixed} released folds\slash seeds
(\texttt{RESULTS.md}, \texttt{CONTRIBUTING.md}); the maintainer re-runs each within the
released budget (single-maintainer; review latency scales accordingly);
results are frozen per version and timed-out runs are shown, not hidden. We do \emph{not}
claim overfitting resistance: folds and outcomes are public --- a reusable-target risk
\citep{kapoor2023leakage}.

\paragraph{Representativeness of the synthetic data.} The continuous synthetic DGPs are
checked against the real semi-synthetic IHDP
data (all $\ihdpN$ realizations) and a controlled kurtosis sweep spanning $0.2$--$70$
(Fig.~\ref{fig:transform}).

\paragraph{Ethical considerations.} All data are public and de-identified by their
publishers; no new personal data is introduced. Mis-specified evaluation misdirects
consequential targeting resources --- the risk this paper reduces.
UpliftBench evaluates neither subgroup fairness nor treatment
burden and does not validate high-stakes deployment.

% (Reproducibility is covered in full by Section~\ref{sec:repro}; no duplicate paragraph
% here -- the track asks that the information be present, not that it be repeated.)

\section{Conclusion}
\label{sec:conclusion}

In uplift evaluation, the metric is the message. On IHDP, AUUC tracks effect
accuracy while Qini does not ($\ihdpDelta$ [$\ihdpDeltaLo,\ihdpDeltaHi$], F1); on Jobs only
policy-risk selection lowers regret (F2), largely removed by threshold calibration. Both
are bounded: F1 holds on one of three continuous families, F2 vanishes under a budgeted
objective. Match the metric to outcome type and
goal: UpliftBench makes that the default.

\bibliography{refs}
\bibliographystyle{ACM-Reference-Format}

\appendix

\section{Deferred discussion and extended material}\label{app:deferred}

\begin{figure*}[t]
  \centering
  \includegraphics[width=0.92\linewidth]{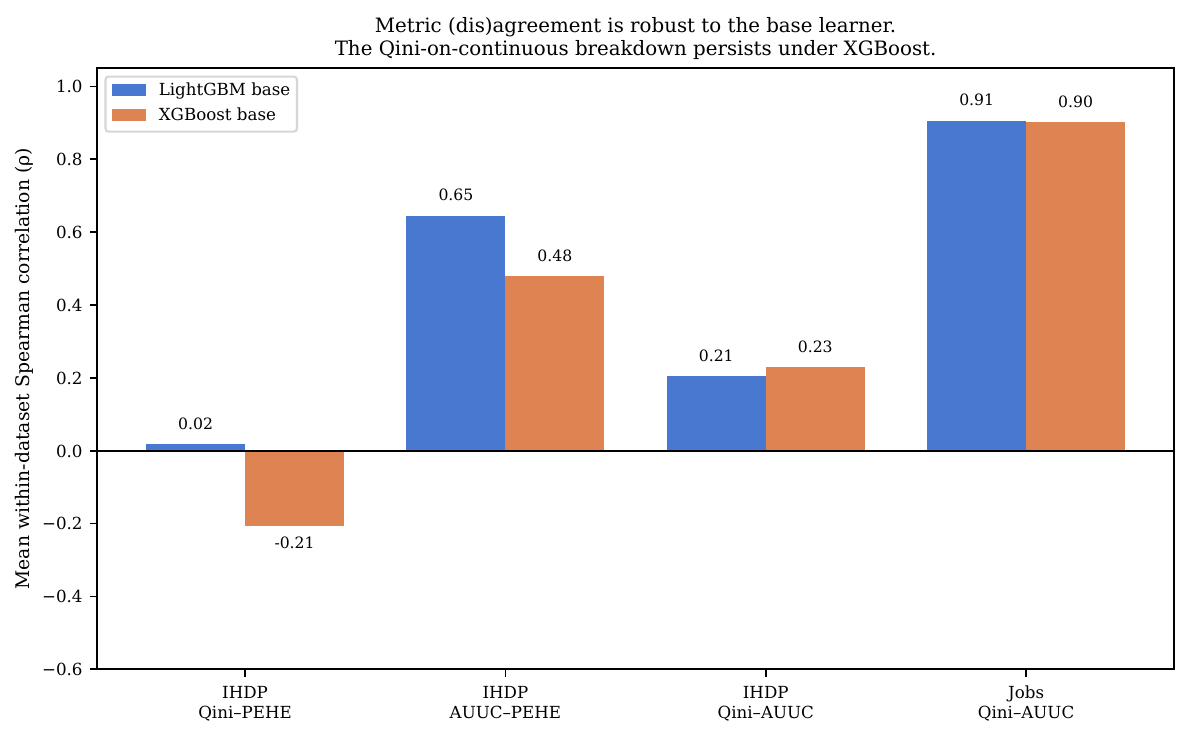}
  \Description{Bar chart comparing metric-versus-reference-objective correlations under LightGBM and XGBoost base learners, showing both mechanisms persist.}
  \caption{The findings are robust to the base learner. Switching LightGBM
  $\rightarrow$ XGBoost, Qini still diverges from effect accuracy ($-\pehe$) on IHDP
  ($\rho:-0.15\rightarrow-0.33$) while AUUC still tracks it ($+0.68\rightarrow+0.53$) --- F1
  --- and ranking metrics still agree with each other on binary
  ($+0.90\rightarrow+0.91$), the mutual-agreement premise of F2. The deployment-objective
  half of F2 is retested under XGBoost in Section~\ref{sec:m2}: the policy-value correlations
  and the cross-repeat selection regret both replicate.}
  \label{fig:robust}
\end{figure*}

\begin{figure*}[t]
  \centering
  \includegraphics[width=\linewidth]{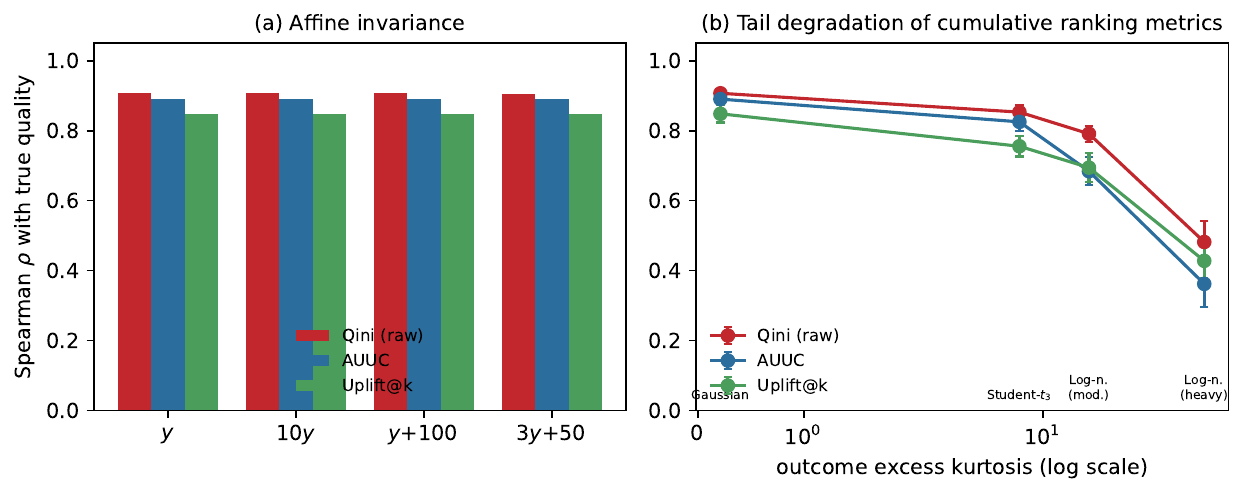}
  \Description{Two panels: metric agreement with the true quality order is invariant to affine outcome transforms, and all cumulative ranking metrics degrade together as outcome kurtosis increases.}
  \caption{\textbf{Controlled probes of the F1 failure signature} (controlled DGP; 8 fixed models of graded
  quality, latent CATE fixed, only the outcome distribution varies; 30 seeds).
  \textbf{(a)} Multiplying or shifting the outcome ($y,\,10y,\,y{+}100,\,3y{+}50$) leaves
  every metric's agreement with the true quality order unchanged --- the failure is not a
  scale artifact (Prop.~\ref{prop:m1}). \textbf{(b)} As outcome excess kurtosis grows, all
  cumulative ranking metrics lose fidelity to true quality, and Qini and AUUC degrade
  \emph{together} --- so the AUUC-over-Qini advantage on the \emph{real} continuous data
  (Fig.~\ref{fig:agreement}a) is a dataset-specific empirical observation, not a guarantee.}
  \label{fig:transform}
\end{figure*}

\paragraph{Which model wins depends on regime and metric (full commentary).} No estimator dominates. By the
appropriate metric, simple meta-learners and the causal forest lead the semi-synthetic
regimes, and simple estimators lead the marketing RCTs (on the released 10K subsamples):
SoloModel on Hillstrom (49.1), ClassTrans on Lenta (20.3), the X- and S-Learner
statistically tied on MegaFon ($\approx$40.3), and on the released 10K X5 subsample no
evaluated model produces a materially positive Qini signal (Qini $\approx$0). These
standings are subsample-scoped, and fold-level standard errors say which are real at the
evaluated $n$: \dataset{Lenta}'s leader shows a large descriptive separation (${>}5$
combined SEs), while the
\dataset{Hillstrom}\slash\dataset{X5}\slash\dataset{MegaFon} leads are within fold noise
(Appendix~\ref{app:criteo}; \texttt{scripts/leaderboard\_resolution.py}). Nemenyi
critical-difference diagrams (reported \emph{descriptively}, not as confirmatory tests) show
wide cliques: on IHDP by $\pehe$, \model{S-Learner} has the best mean rank (2.00, CD$=2.38$)
with DR worst; on binary data by Qini, \model{ClassTrans} leads (2.07, CD$=2.41$). These
standings are protocol-scoped: samples in the hundreds, 3-fold CV, and a $B{=}10$ tuning
budget can under-serve nuisance-heavy learners (DR/R), so ``simple learners lead'' is an
observation about this regime, not a general estimator verdict (Section~\ref{sec:guide},
Step~3).

\paragraph{Calibrating the threshold closes most of the gap.} Proposition~\ref{prop:rankinv}
predicts that the sign-threshold gap should shrink once the score \emph{level} is supplied.
It does: replacing the fixed zero threshold with one \emph{calibrated on the selection folds}
(per model, the threshold minimizing the same out-of-sample IPW policy risk over a 41-point
score-quantile grid on the selection seeds' pooled out-of-fold predictions; the held-out
evaluation seed is untouched) cuts the Qini-selection regret from $\thrRegSign$
[$\thrRegSignLo,\thrRegSignHi$]\footnote{This experiment recomputes its rotation baseline
from the stored per-unit predictions, so it differs in the third decimal from the
selector-ladder rotation's $\regQini$ (Appendix~\ref{app:extrobust}); both are the same
quantity under slightly different rotations.} to $\thrRegCalib$
[$\thrRegCalibLo,\thrRegCalibHi$] --- a
$\thrClosed\%$ reduction, with the calibrated interval no longer excluding zero. So F2 on the
sign-threshold objective is largely a score-\emph{level} effect, exactly as the invariance
boundary implies; the actionable message is to pair a ranking metric with a calibrated
threshold rather than trust $\mathrm{sgn}(\hat\tau)$ (Section~\ref{sec:guide}; full protocol
in Appendix~\ref{app:extrobust}).

\paragraph{Mechanism probes, summarized.} RATE and oracle residualization improve Qini
only modestly ($\rateAutocMOne$ and $\bFourAdj$ vs.\ AUUC's $\auucMOne$; the oracle
adjustment is an optimistic oracle benchmark for outcome-residualization-based variance
reduction; PUC and pROCini were developed for \emph{binary}-outcome uplift evaluation and are
not directly targeted at the continuous-outcome setting studied in F1 --- evaluating
compatible variants remains useful future work --- whereas the variance-reduced Qini of
\citet{bokelmann2024improving} is the highest-priority planned extension because it directly
targets Qini variance), while controlled sweeps of kurtosis, treatment imbalance, and
outcome-correlated score error do not isolate the observed Qini/AUUC separation.
A weighting decomposition sharpens the localization considerably: by
Lemma~\ref{lem:gk}, Qini, the field's shipped cumulative-gain AUUC
(\texttt{causalml}\slash\texttt{scikit-uplift}), and our prefix-mean AUUC are all
integrals $\int w(k)\,u(k)\,dk$ with $w(k)=T_k$, $k\,n$, and $1$ respectively. Scoring
the identical stored predictions on all $\gainAuucN$ IHDP realizations (our
implementation checked against \texttt{causalml.auuc\_score} on all $\gainCmlFolds$
evaluated folds: rank agreement $\gainCmlRho$, within $\gainCmlDev$ of the exact
$n{\times}$ scaling identity): the cumulative-gain AUUC correlates with
$-\pehe$ at $\gainAuuc$ [$\gainAuucLo,\gainAuucHi$] --- \emph{better} than our
prefix-mean AUUC ($\meanAuucRho$ [$\meanAuucRhoLo,\meanAuucRhoHi$]) --- while Qini
sits at ${\approx}0$. Among the evaluated functionals the discrepancy therefore
localizes not to depth weighting but to \emph{treated-count} weighting: $T_k$ differs
from $k\,n\,\pi_1$ exactly by the
treatment-interleaving fluctuation that Lemma~\ref{lem:baseline} isolates and that the
composite $R$ scales with. This also validates the practical advice for the shipped
implementations: cumulative-gain AUUC tracks effect accuracy on IHDP at least as well as
any variant we compute. F1 therefore remains a bounded, benchmark-discovered failure
signature, now localized to treated-count weighting, whose sufficient mechanism is open
(all values in Appendix~\ref{app:extrobust}; \texttt{scripts/gain\_auuc\_check.py}).

\paragraph{Estimator exclusion (full detail).} Recomputing with the DR-Learner excluded, and with both
DR- and R-Learners excluded (Table~\ref{tab:m1sens}), Qini's correlation with ground truth
remains indistinguishable from zero in every configuration (LightGBM: $+0.02 \rightarrow
+0.12 \rightarrow -0.08$; XGBoost: $-0.21 \rightarrow -0.03 \rightarrow -0.12$) and the paired $\Delta$ is
positive in all six (base learner $\times$ exclusion) configurations, with its CI excluding
zero in five of the six (the exception is XGBoost with DR- and R-Learners excluded,
$\Delta=+0.34$ [$-0.16,+0.84$]). The
complement bounds the claim: AUUC's \emph{absolute} alignment falls as unstable estimators
are removed, so we state F1 relatively --- AUUC consistently \emph{more} aligned than Qini
--- not as a fixed absolute level.

We name the pattern rather than leave it to be inferred. $\Delta$ declines
with exclusion under both base learners ($+0.63 \rightarrow +0.40 \rightarrow +0.42$;
$+0.69 \rightarrow +0.44 \rightarrow +0.34$), and the decline is entirely in AUUC's
alignment ($+0.65 \rightarrow +0.52 \rightarrow +0.34$) while Qini's stays flat and near zero.
Part of AUUC's advantage is therefore AUUC correctly penalizing the numerically unstable
learners --- whose fold-level $\pehe$ reaches $\drPeheMax$ (Appendix~\ref{app:m1detail}) ---
to which a rank-only statistic is indifferent. That is a real mechanism, not an artifact, and
it does not exhaust the effect: with both unstable learners removed the LightGBM advantage is
still $+0.42$ [$+0.06,+0.80$] on the 10-split panel, and the wide interval there is a
small-$n$ artifact --- on all $\ihdpN$ realizations the excluded-panel gaps are
$\eHundDeltaExDR$ [$\eHundDeltaExDRLo,\eHundDeltaExDRHi$] (DR removed) and
$\eHundDeltaExDRR$ [$\eHundDeltaExDRRLo,\eHundDeltaExDRRHi$] (DR and R removed), both
CIs comfortably excluding zero. \textbf{The advantage attenuates but does not vanish when the
numerically unstable learners are removed.}

\paragraph{The candidate panel is Qini-tuned.} Every estimator's hyperparameters are selected
by inner-fold validation Qini (Section~\ref{sec:methods}), including on the continuous
benchmarks where we then report that Qini misranks. This conditions the \emph{candidate set},
not the comparison: all six metrics are computed on identical fitted models and identical
out-of-fold predictions, so the paired AUUC-over-Qini gap is a statement about metrics scoring
the same candidates. It does mean the panel is not neutral --- a Qini-tuned search may favour
configurations Qini rates highly. We therefore rerun the primary continuous panel with
tuning switched off entirely (Appendix~\ref{app:notune}): at fixed default hyperparameters,
with no inner Qini search anywhere in the pipeline, the separation persists (paired gap
$\notuneDelta$ [$\notuneDeltaLo,\notuneDeltaHi$]), so F1 is not induced by Qini-tuned
selection. We also run the intermediate case, keeping the released budget but switching the
inner objective from Qini to AUUC: the gap is $\atDelta$ [$\atDeltaLo,\atDeltaHi$], slightly smaller
than as released, so letting the favoured metric choose the panel does not manufacture its
advantage either. Tuning by $\pehe$ or policy risk remains untested and is not a like-for-like
control ($\pehe$ requires the ground truth the metric substitutes for; policy risk targets a
different objective). The bounded budget ($B{=}10$) still compresses candidate spread in both
directions (Appendix~\ref{app:extrobust}).

\paragraph{Propensity nesting is imperfect inside the tuning loop.} For the observational and
semi-synthetic families (IHDP, Jobs), the propensity model is fit once on the whole
outer-training fold and its predictions are then sliced for the inner tuning folds, so
inner-validation rows influenced the propensity values used during inner training. The
deviation involves covariates and treatment assignments only --- the propensity model never
sees outcomes --- and the inner selection criterion (validation Qini) does not use the
propensity at all; it enters only through the fitted candidate. It therefore
affects hyperparameter \emph{selection} only: every reported test-fold metric uses a
propensity model fit without that fold, and the RCT families use their known assignment
probabilities. Refitting propensity strictly within each inner-training fold, with a
regression test enforcing it, is planned for the next release; we do not expect it to move F1
(rank-based, and unchanged under estimator exclusions and base-learner swaps) but we have not
demonstrated that.

\paragraph{F2's empirical half is within-sample.} Selection, threshold calibration, and
evaluation all draw on the same underlying Jobs observations: the cross-repeat rotation
re-partitions one sample rather than holding out fresh units, and the ten released
``instances'' are overlapping re-splits of that sample, so treating them as ten exchangeable
clusters overstates the independent information available (their per-model risk vectors
correlate at $\jobsRiskCorr$; Section~\ref{sec:m2}). The empirical half of F2 is therefore a
\emph{descriptive within-sample case study} of the selection cost, and its intervals describe
benchmark-split variability, not population sampling error; selection and evaluation sharing
outcome noise can also flatter the direct risk selector. The structural half
(Prop.~\ref{prop:rankinv}) does not depend on this design. A genuinely disjoint protocol ---
fit on the training portion, select and calibrate on a validation subset, then estimate policy
value on the untouched experimental test units the loader already exposes, accounting for
repeated unit membership across the ten splits --- is the committed next step.

\paragraph{$\pehe$ is not the sole standard --- by design.} F1 is shown against $\pehe$
\emph{and} sibling ranking metrics, F2 against operational \emph{policy risk}; the conclusions
do not rest on $\pehe$ alone.

\begin{table*}[t]
\centering
\caption{Scope delta against the closest prior work, \citet{yang2026structural}. Both papers
conclude that targeting quality and effect-estimation quality can come apart; the designs
that support the conclusion differ, and the binary-vs-continuous contrast is observable only
in a multi-regime design.}
\label{tab:yangdelta}
\small
\begin{tabular}{lll}
\toprule
Axis & \citet{yang2026structural} & UpliftBench \\
\midrule
Perturbation & injected structural bias & none; observed regimes \\
Families & one semi-synthetic & seven (25 instances) \\
Outcome regimes & continuous only & binary \emph{and} continuous \\
Reference objective & effect accuracy & effect accuracy, policy risk, policy value \\
Question & metric stability\slash robustness & whether metric choice flips conclusions \\
Deployment rule & not varied & sign-threshold vs.\ budgeted \\
Artifact & analysis code & versioned benchmark + result ledger \\
\bottomrule
\end{tabular}
\end{table*}

\paragraph{A third continuous family, and the end of the tail explanation.} Two continuous
families is a two-point sample, so we add a third with known per-unit effects and a response
surface that differs from IHDP's \emph{in kind}: \dataset{Revenue-Synthetic}, in which
baseline spend is lognormal and the treatment effect is \emph{multiplicative}, making the
row-level ITE heavy-tailed by construction --- the setting in which a cumulative-sum
statistic should be most exposed to single large outcomes. We run $\rsN$ realizations with
the same $\rsNModels$ estimators, fold protocol, and tuning budget as the primary panel
(\texttt{make repro-f1-revsynth}; 60 cells, all completing).

\textbf{F1 does not replicate there, and the reason matters.} Qini tracks effect accuracy on
this family ($\rsQini$ [$\rsQiniLo,\rsQiniHi$]) on par with AUUC
($\rsAuuc$ [$\rsAuucLo,\rsAuucHi$]), so the paired AUUC-over-Qini gap vanishes:
$\rsDelta$ [$\rsDeltaLo,\rsDeltaHi$], positive on only $\rsPos$ of $\rsN$ realizations. This
is not a DR-Learner artifact --- excluding it moves the gap to $\rsDeltaExclDR$, and
excluding both DR and R to $\rsDeltaExclDRR$ --- even though the DR-Learner's $\pehe$ diverges
here as it does on IHDP.

Taken with the outcome distributions, this \emph{falsifies} the heavy-tail explanation rather
than merely failing to support it. \dataset{Revenue-Synthetic}'s outcome has median excess
kurtosis $\rsKurt$ (range $\rsKurtLo$--$\rsKurtHi$) and Qini is fine on it; IHDP's primary
panel has median excess kurtosis $\ihdpKurtMed$ --- it is \emph{mild}-tailed, comparable to
ACIC --- and Qini fails on it. The family that breaks F1 is the least heavy-tailed of the
three. (Precisely: excess kurtosis of the \emph{factual outcome}, median $\ihdpKurtMed$ over
the 10 primary IHDP splits. Across all $\ihdpN$ realizations the median is $0.3$ with range
$\kurtLo$--$\kurtHi$, so a minority of IHDP realizations are heavy-tailed while the primary
panel as a whole is not; both statements are used consistently below.) Heavy tails are therefore neither necessary nor sufficient for the failure, which is
consistent with the controlled experiment (Table~\ref{tab:transform}), with the covariate
nulls (Table~\ref{tab:sepcov}), and with nothing else we have tested.

\paragraph{What F1 is, stated at its true scope.} Across three continuous families with
reference effects, Qini's alignment spans $+0.02$ (IHDP) to $+0.48$ (ACIC), with
\dataset{Revenue-Synthetic} at $\rsQini$: it is \emph{sometimes} uninformative and
\emph{sometimes} the best of the ranking metrics, and no measured property yet predicts
which case a practitioner is in --- the lemma-derived composite of
Appendix~\ref{app:sepcov}, whose per-family medians order the three families, is the one
candidate still standing, pending its controlled sweep. That is the operationally important
claim, and it is
weaker than ``Qini fails on continuous outcomes'' while being harder to dismiss: a metric
whose reliability varies unpredictably across datasets cannot be trusted unvalidated on a new
one, which is exactly the diagnostic we recommend (Section~\ref{sec:guide}). F1 names the
failure case and demonstrates that it occurs on a standard, widely used benchmark; it does not
claim the failure is general, and Section~\ref{sec:limitations} records that one of three
evaluated continuous families exhibits it.

\paragraph{A note on Qini normalization.} The optional Qini normalization divides every
model's score within a split by the same perfect-curve area. When that area is strictly
positive, dividing by a common positive constant \emph{preserves} the within-split model
ranking; when it is zero the normalized score is undefined; and when it is negative the
division \emph{reverses} score orientation. On continuous outcomes the standard perfect curve
is built from outcomes treated as counts and its area is frequently non-positive, so the
normalized score is degenerate there. We therefore report the \emph{unnormalized} Qini
throughout and make no claims based on normalized Qini.

\paragraph{ACIC 2016: an independent boundary-case validation family.} ACIC provides an
independent test of the finding's scope: Qini tracks effect accuracy on its continuous
settings, showing that F1 is not a universal continuous-outcome law. Concretely, we add
$\acicN$ instances from the ACIC 2016 competition \citep{dorie2019automated} --- real
Collaborative Perinatal Project covariates ($n{=}4802$), nine heterogeneous-effect settings
stratified over response model $\times$ heterogeneity $\times$ overlap, two replicates
each, generated by the official package (a validation family, not one of the seven primary
benchmark families). ACIC outcomes are continuous with excess kurtosis ${\approx}0.4$ ---
similarly mild (IHDP's primary-panel median is $\ihdpKurtMed$) --- and there Qini
behaves normally: rank correlation with $-\pehe$ is
$\acicQiniMOne$ [$\acicQiniMOneLo,\acicQiniMOneHi$], close to uplift-at-$k$
($\acicUpliftMOne$) and the rank-weighted average treatment effect (RATE-AUTOC,
$\acicRateAutocMOne$); AUUC's is $\acicAuucMOne$
[$\acicAuucMOneLo,\acicAuucMOneHi$], so the paired AUUC-over-Qini gap --- the statistic F1
is defined by --- is $\acicGap$ [$\acicGapLo,\acicGapHi$] on ACIC: mildly positive in
point estimate, with a CI covering zero. The breakdown is observed on the
evaluated IHDP family and not on the evaluated ACIC family; the two differ in many ways,
and Table~\ref{tab:sepcov} shows tail weight does not predict
the metric gap.

\paragraph{A controlled test, and an honest boundary.} Holding models
and latent CATE fixed and varying only the outcome distribution (Fig.~\ref{fig:transform}),
affine transforms change nothing, and rising excess kurtosis degrades \emph{all} cumulative
ranking metrics --- Qini and AUUC \emph{together} --- so the controlled experiment does not
by itself reproduce Qini's isolation. We therefore separate what is mathematically
established (the identities; affine invariance) from the observed signature (Qini's
isolation on the evaluated IHDP benchmark) and from candidate mechanisms, which
remain open after targeted tests (Section~\ref{sec:robustness}; full discussion in
Appendix~\ref{app:m1detail}).

\paragraph{Scope of F1.} We do not claim Qini is categorically invalid on
continuous outcomes --- in a controlled sweep all cumulative ranking metrics degrade together
as tails thicken (Fig.~\ref{fig:transform}), and on both validation families Qini tracks
effect accuracy ($\acicQiniMOne$ on ACIC 2016, $\rsQini$ on
\dataset{Revenue-Synthetic}); the three families form a gradient in the paired gap
($\ihdpDelta$, $\acicGap$, $\rsDelta$) rather than a binary split
(Section~\ref{sec:robustness}). We establish that
on the standard continuous benchmark (IHDP, all $\ihdpN$ realizations) the Qini ranking is
effectively unrelated to effect accuracy while AUUC and uplift-at-$k$ stay informative, not
explained by a single unstable estimator, base learner, Qini implementation variant, or the
tested imbalance-$\times$-tails mechanism. We
report it because practitioners and libraries apply Qini to continuous outcomes and the
failure is silent --- and regime-dependent, which is precisely what makes it dangerous.

Nor is it an artifact of
the six-model rank correlation being coarse: restated as \emph{pairwise concordance} ---
for each realization and each model pair, does the metric order the pair as $-\pehe$ does?
--- Qini agrees with effect accuracy on $\concQini$ [$\concQiniLo,\concQiniHi$] of
the $\concN$ comparisons across all $\ihdpN$ realizations (15 pairs each), indistinguishable from a coin flip,
while AUUC agrees on $\concAuuc$ [$\concAuucLo,\concAuucHi$].

This reading is consistent with
\citet{curth2021really}, who argue that IHDP is idiosyncratic and that conclusions drawn on
it need not generalize: our contribution is to show the idiosyncrasy is also
\emph{metric-facing} --- the field's standard continuous benchmark silently breaks its most
common evaluation metric --- and that no proposed property, theirs or ours, has yet been shown
to predict the metric-specific gap ex ante; the one surviving candidate is the lemma-derived
composite whose per-family medians track the cross-family gradient
(Appendix~\ref{app:sepcov}), pending its controlled sweep.

\paragraph{Uncertainty: what is resampled.} The metric-agreement statistics (F1, F2) are
per-dataset Spearman correlations \emph{across models}, and each cross-metric claim is a mean
of these over datasets. Their CIs are \emph{cluster bootstraps whose resampling unit is the
benchmark realization} (not the fold), keeping all models and metrics within a resampled
realization paired. The two families differ in what a ``realization'' is: IHDP realizations
are conditionally independent \emph{simulated potential-outcome draws} over fixed covariates,
whereas the 10 Jobs units are the train/test \emph{re-splits} of
\citet{shalit2017estimating} over the same underlying LaLonde$+$PSID observations ---
exchangeable partitions of one sample, not independent draws. The cluster bootstrap treats
both as exchangeable clusters; for Jobs this can understate uncertainty to the extent of
between-split dependence, which we quantify with a design-effect sensitivity in
Section~\ref{sec:m2noise}. These intervals quantify variation across realizations of the
benchmark --- not generalization across unrelated real-world datasets --- and the effective
sample size is the number of realizations (10 continuous; 10 Jobs), never the fold
rows (2{,}052; Section~\ref{sec:results}). Within a realization, the model-level metric means already average the
9 folds. We also report
mean pairwise Spearman correlations (complete-case per pair) and Nemenyi critical-difference
diagrams \citep{demsar2006statistical} at $\alpha{=}0.05$ on a comparable model/dataset group.

baseline; our AUUC --- the \emph{prefix-mean} AUUC; ``AUUC'' unqualified always means
this variant --- integrates the difference of cumulative \emph{means} $u(k)$ minus the ATE
triangle --- a \emph{shifted} convention: because $u(k)$ is a prefix mean, a random ranking
scores ${\approx}\mathrm{ATE}/2$ under it, not $0$; the shift depends only on the fold's
data, so it is identical for every model and cancels from every reported statistic ---
and both use the population-fraction axis; we report raw (unnormalized) areas
(edge-case conventions in Appendix~\ref{app:m1detail}). The AUUC shipped by
\texttt{causalml}\slash\texttt{scikit-uplift} instead integrates the cumulative-gain
curve $k\,u(k)$; we call that variant the \emph{cumulative-gain} AUUC and evaluate it in
Section~\ref{sec:robustness}.

All
released results use a classifier for the R-Learner and Causal Forest treatment
nuisance (an EconML requirement for \texttt{discrete\_treatment}); releases up to
v1.4.0 passed a regressor, caught by an external audit, and the full benchmark was
rerun under the corrected nuisance. The fix does not change F1: the corrected panel's
gap is $\nfixGap$ [$\nfixGapLo,\nfixGapHi$] against $\nfixOldGap$
[$\nfixOldGapLo,\nfixOldGapHi$] for the archived pre-fix panel, with the Causal
Forest cells essentially unchanged across the fix (cell-level $\rho=\nfixCfCorr$);
full comparisons in Appendix~\ref{app:extrobust}.

The \dataset{Synthetic} forest timeouts at $n{=}2{,}000$ reflect the per-job wall-clock
budget interacting with the $B{=}10\times2$ inner tuning loop (${\sim}189$ forest fits
per fold evaluation), not a failure at that sample size; the budget, like everything
else, is fixed and released.

\section{Full practitioner's guide}\label{app:guide}

\paragraph{Continuous-outcome diagnostic.} Do not use Qini as the sole selector. Bootstrap
independent evaluation units (or benchmark realizations where available; folds re-partition
the same sample and are not the inferential unit), compare Qini and AUUC model rankings, and
report their rank correlation and winner agreement. When the rankings diverge materially, prefer an
effect-accuracy or deployment-aligned objective where identifiable and treat the winner
as unstable.

\begin{enumerate}[leftmargin=*,label=\textbf{Step \arabic*.}]
\item \textbf{Match the metric to the outcome type.} For \emph{continuous} outcomes, do not use
  the unnormalized Qini as the \emph{sole} model-selection criterion without
  validating it against an effect-accuracy, policy, or average-uplift criterion. In our IHDP
  benchmark it fails to track ground-truth quality while AUUC and uplift-at-$k$
  perform substantially better (F1) --- but on \dataset{Revenue-Synthetic} the two are on
  par ($\rsDelta$
  [$\rsDeltaLo,\rsDeltaHi$]), so \emph{no standing preference between the ranking metrics is
  warranted}. Prefer $\pehe$ where effects are known or simulable; otherwise compare the
  ranking metrics against each other on your own data and distrust the winner wherever they
  diverge. For binary outcomes, Qini can be used as a
  ranking metric, subject to the usual variance, tie, and objective-alignment checks.
\item \textbf{Match the metric to the objective.} For \emph{ranked targeting} at a fixed
  cutoff, a ranking metric suffices. For \emph{sign-threshold deployment}
  ($\pi(x)=\mathbb{1}[\hat\tau(x)\ge0]$), a ranking metric alone is structurally insufficient
  (Prop.~\ref{prop:rankinv}): pair it with a decision threshold calibrated on held-out
  selection data, which closed most of the observed selection gap on Jobs
  (Section~\ref{sec:m2}). For \emph{budget-constrained allocation}, evaluate policy
  value directly at the relevant budget (F2); if allocation magnitude depends on predicted
  effect size rather than rank alone, additionally assess calibration with an identified
  estimator appropriate to the treatment-assignment regime
  (Section~\ref{sec:calibration}).
\item \textbf{Respect the regime.} In our randomized, small-fold benchmark settings,
  R/DR-learners sometimes became numerically unstable while simpler learners (ClassTrans on
  binary outcomes, S-Learner) were competitive. Include simple baselines, and reach for
  orthogonal/doubly-robust learners when nuisance estimates can be supported reliably ---
  typically observational data with adequate sample size.
\item \textbf{Validate across datasets.} Single-dataset rankings transfer only moderately
  (within-regime reference-objective $\rho\approx0.5$); confirm on more than one dataset.
\end{enumerate}

\section{Calibration: the full identification-stratified analysis}\label{app:calibfull}

% Each panel gets its own full-width float: side-by-side subfigures downscaled these
% ~4.6x/5.5x in the two-column layout, which made the tick labels unreadable.
\begin{figure*}[t]
  \centering
  \includegraphics[width=\linewidth]{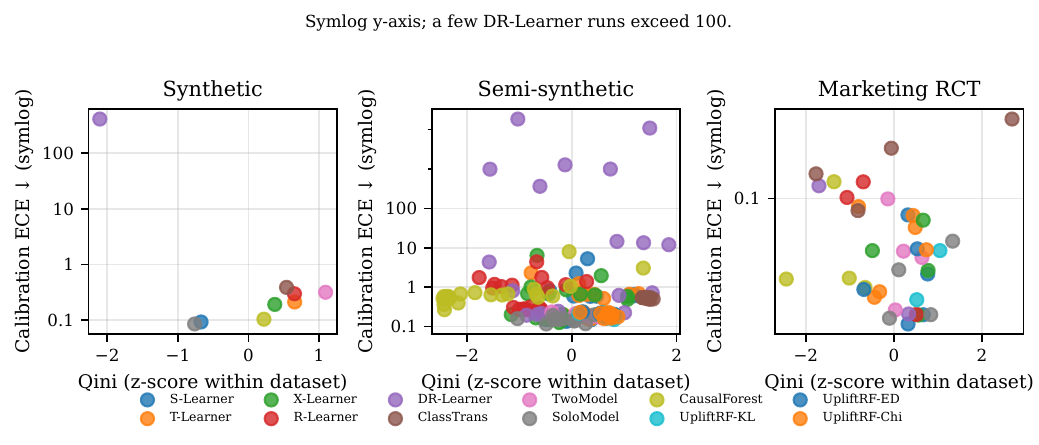}
  \Description{Scatter of within-dataset Qini z-score against calibration ECE by regime, shown descriptively.}
  \caption{Qini (z-scored within dataset) vs.\ calibration ECE, by regime, shown
  \emph{descriptively} (no pooled inferential fit; observations are clustered within
  instances). Only the randomized regimes identify ECE (see text).}
  \label{fig:ece}
\end{figure*}

\begin{figure*}[t]
  \centering
  \includegraphics[width=\linewidth]{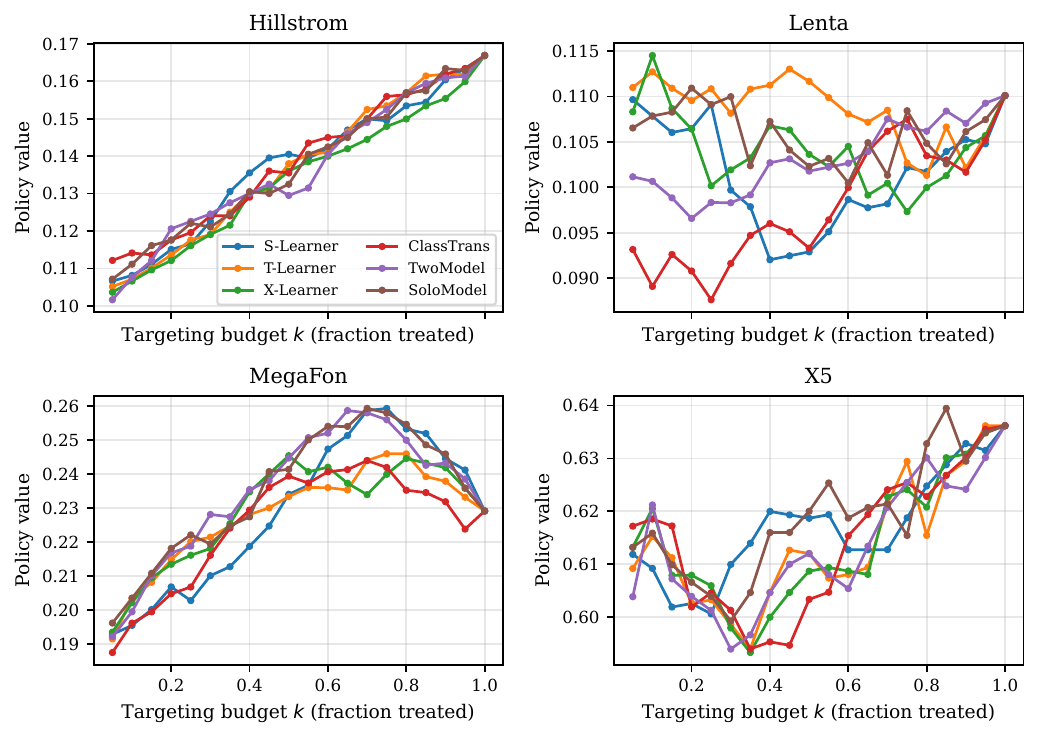}
  \Description{Line chart of estimated policy value against targeting budget for the marketing datasets; the budget-optimal model changes with the budget.}
  \caption{Policy value vs.\ targeting budget $k$ (marketing RCTs). The budget-optimal
  model changes with $k$ --- e.g.\ on Hillstrom the best model shifts from ClassTrans at
  $k{=}0.1$ to SoloModel at $k{=}0.9$ --- so no single ranking fixes the deployment choice.}
  \label{fig:pv}
\end{figure*}

\paragraph{Diagnostic and its identification requirement.} We measure uplift calibration by a
bin-based diagnostic analogous to expected calibration error (ECE): on the held-out test fold,
units are grouped into 10 equal-count (quantile) bins of predicted uplift, and we report the
bin-size-weighted mean absolute gap between the mean predicted uplift and the observed
within-bin uplift, excluding bins that lack a treated or a control unit. The within-bin
treated-minus-control difference identifies the bin-average uplift \emph{only under randomized
treatment}. This holds on the marketing RCTs and the synthetic RCT; it does \emph{not} hold on
IHDP (a confounded semi-synthetic design) or on Jobs (which mixes an experimental arm with
observational PSID controls), where the contrast conflates uplift with selection. Calibration
of treatment-effect predictions is a recognized concern with dedicated estimators
\citep{leng2024calibration,vanderlaan2023isotonic,yadlowsky2025rate}; ours is a coarse
instance, valid only where treatment is randomized.

\paragraph{Where it is identified, calibration does \emph{not} disagree with Qini.} On the
$\calNident$ randomized instances the within-instance Spearman correlation between Qini and
$-$ECE is \emph{positive} ($\calRhoId$, $95\%$ cluster-bootstrap CI
$[\calRhoIdLo, \calRhoIdHi]$), and the paired ECE cost of selecting by Qini is small (median
$\calDeltaMed$, CI $[\calDeltaLo, \calDeltaHi]$). Mis-calibration is therefore \emph{not} a
third instance of F2: where it can be measured without confounding, calibration is moderately
\emph{aligned} with Qini and the ECE cost of selecting by Qini is small in the median,
though imprecisely bounded (the interval's upper end exceeds most binary-outcome ECEs). (Pooling over all $25$ instances
instead drives the correlation to $\calRhoAll$ --- driven by applying an unidentified
within-bin contrast to the confounded IHDP/Jobs data; the pooled contrast and a bin-count robustness check are in
Appendix~\ref{app:calibration}.) An identified analysis on the semi-synthetic datasets would
need oracle-CATE targets and stored per-unit predictions, which we leave to the release. The
separate observation that the budget-optimal model \emph{changes with the budget}
(Fig.~\ref{fig:pv}) still holds and reinforces F2: there is no single ranking a practitioner can
read off in advance.

% ============================================================

\section{F1: proofs, derivations, and detailed constructions}
\label{app:m1detail}

\paragraph{Edge-case conventions (as shipped).} The curves are anchored at $(0,0)$; on any
prefix with $C_k=0$ the gain contribution is set to $0$, and where $T_k=0$ or $C_k=0$ the
uplift value $u(k)$ is treated as $0$ (undefined prefixes do not contribute), so both
integrals begin effectively once each arm appears. Ties are broken by input order,
deterministically.

\begin{proof}[Proof of Lemma~\ref{lem:gk}]
\begin{align*}
T_k\,u(k)&=T_k\big(\tfrac1{T_k}\!\sum_{i\le k} y_i t_i-\tfrac1{C_k}\!\sum_{i\le k}y_i(1-t_i)\big)\\
&=\sum_{i\le k}y_i t_i-\tfrac{T_k}{C_k}\sum_{i\le k}y_i(1-t_i)=g(k). \qedhere
\end{align*}
\end{proof}

\begin{proof}[Proof of Lemma~\ref{lem:baseline}]
By Lemma~\ref{lem:gk}, $q(k)=T_k u(k)-\tfrac{k}{n}T_n u(n)
=T_k\big[u(k)-\tfrac{k}{n}u(n)\big]+\tfrac{k}{n}u(n)(T_k-T_n)
=T_k\,a(k)+\tfrac{k}{n}u(n)(T_k-T_n)$.
\end{proof}

Two effects therefore separate the scores: (i) Qini depth-weights the uplift contrast by
$T_k$; and (ii) an additional term proportional to the deviation of the cumulative treated
count $T_k$ from its depth-proportional value $\tfrac{k}{n}T_n$, which depends on how a
model's ranking interleaves treated and control units. Continuous outcome magnitudes and the
local treated/control composition thus enter the two integrated scores differently.

\begin{proposition}[Affine invariance of within-split rankings]\label{prop:m1}
Fix a split and replace each outcome $y_i$ by $\alpha y_i+\beta$ with $\alpha>0$. Then
every model's $Q$, $A$, and uplift-at-$k$ is multiplied by $\alpha$ (the additive $\beta$
cancels through the treated/control count correction), so the induced ranking of models is
unchanged.
\end{proposition}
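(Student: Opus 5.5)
The plan is to reduce everything to the behaviour of the prefix uplift contrast $u(k)$ under $y\mapsto\alpha y+\beta$. The ranking of units depends only on $\hat\tau$, not on $y$, so the sort order and the cumulative counts $T_k$, $C_k$ are unchanged by the transformation, and so is the edge-case convention of Appendix~\ref{app:m1detail}: whether a prefix has $T_k=0$ or $C_k=0$ depends only on treatments. On any prefix with both arms present I will write
\[
u(k)=\tfrac1{T_k}\textstyle\sum_{i\le k}y_it_i-\tfrac1{C_k}\sum_{i\le k}y_i(1-t_i).
\]
Substituting $\alpha y_i+\beta$ makes each prefix mean become $\alpha(\text{mean})+\beta$, so the $\beta$'s cancel in the difference and $u'(k)=\alpha u(k)$. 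On prefixes where $u$ is set to $0$, this holds trivially. This is the ``count correction'' cancellation the statement refers to.

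Next I will propagate this through each score. For AUUC, the integrand is $u(k)$ minus the ATE triangle. The triangle is built from $u(n)$, which also scales by $\alpha$, so the integrand scales by $\alpha$ and so does $A$. For Qini, Lemma~\ref{lem:gk} gives $g(k)=T_k u(k)$, so $g'(k)=\alpha g(k)$ on admissible prefixes and $g'(k)=0=\alpha g(k)$ otherwise. The chord baseline $\tfrac kn g(n)$ scales the same way, so $q(k)$ of Lemma~\ref{lem:baseline} scales by $\alpha$, and hence so does $Q$. For uplift-at-$k$, which is $u$ evaluated at the top-$k$ cutoff (a difference of arm means), the same argument gives scaling by $\alpha$.

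The conclusion then follows. All models on the split share the common factor $\alpha>0$, and multiplying every score by a positive constant preserves strict and weak orderings, so the induced model ranking is unchanged for each metric.

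The main obstacle is bookkeeping rather than algebra. I must check that every ingredient of each shipped score is built from $u$ (or $g$) together with $y$-free quantities, with no stray raw-outcome term. In particular, the ATE triangle must use the same count-corrected contrast $u(n)$ and not, say, $\bar y$. I would verify this against the precise definitions, and note that the prefix-mean convention's ${\approx}\mathrm{ATE}/2$ shift is itself a multiple of $u(n)$, so it also scales by $\alpha$.
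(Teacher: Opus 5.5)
Your proposal is correct and follows essentially the paper's argument: the treated/control count correction cancels the shift $\beta$ in $g$ and $u$, scaling multiplies them (and hence their areas and uplift-at-$k$) by $\alpha$, and a common positive factor preserves the model order. The differences are cosmetic. You handle shift and scale in one step and get $g$ from Lemma~\ref{lem:gk} rather than by direct substitution. You also make explicit what the paper leaves implicit: that the sort order, $T_k$, $C_k$ and the edge-case conventions are outcome-free, and that the chord and ATE-triangle baselines are built from $g(n)$ and $u(n)$.
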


\begin{proof}
Shift: $g(k)\!\to\! g(k)+\beta\big[T_k-C_k\,(T_k/C_k)\big]=g(k)$, and
$u(k)\!\to\! u(k)+\beta(1-1)=u(k)$; both areas are shift-invariant. Scale multiplies every
outcome, hence $g,u$ and their areas, by $\alpha$. Within a split all models share the same
$(\alpha,\beta)$, and a common positive factor preserves order. \qedhere
\end{proof}

Table~\ref{tab:affine} confirms this empirically: multiplying or shifting the outcome leaves
every metric's agreement with the true quality order unchanged to two decimals. The F1
failure is therefore not driven by outcome scale or location; distributional shape remains
relevant but does not by itself explain the observed Qini/AUUC separation.

\paragraph{The 12-unit counterexample in full.} On $n{=}12$ units with balanced treatment
and one large control outcome ($y{=}40$), a near-random model~A and a near-true model~B
satisfy: Qini prefers the \emph{worse} A ($Q_A{=}\cxQiniA$ vs.\ $Q_B{=}\cxQiniB$), while
both AUUC ($\cxAuucA$ vs.\ $\cxAuucB$) and $-\pehe$ ($\pehe$ $\cxPeheA$ vs.\ $\cxPeheB$)
prefer B. Deleting that one outcome flips Qini to prefer B ($\cxQiniAdrop$ vs.\
$\cxQiniBdrop$), establishing the reversal is caused by outcome magnitude --- not treatment
imbalance or estimator instability. Table~\ref{tab:cxvectors} lists all twelve units
(treatment, outcome, true CATE, both models' scores and induced ranks) so the example is
verifiable by hand.

\paragraph{The three-level claim taxonomy in full.} \emph{Mathematically established:}
rankings are affine-invariant (Prop.~\ref{prop:m1}) and $Q$ integrates a depth-weighted
version of AUUC's contrast (Lemma~\ref{lem:gk}). \emph{Observed signature:} on the
continuous benchmark family we evaluate (the IHDP realizations; the shipped
\dataset{Synthetic} generator is binary-outcome and belongs to the binary panel),
Qini's ranking --- unlike AUUC's --- fails to track effect accuracy, and
a single large outcome can reverse Qini against it. \emph{Candidate explanation, not yet
isolated:} exactly why AUUC remains informative there while Qini does not; the controlled
experiment shows heavy tails degrade both together, so we do not claim heavy tails alone
explain the separation, nor that AUUC is universally tail-robust --- its advantage on these
data is a dataset-specific empirical observation. This is why F1 is named
\emph{outcome-regime sensitivity} rather than a proven binary-vs-continuous theorem.

\paragraph{DR-Learner detail and the estimator-panel caveat.} The DR-Learner's nested
nuisance estimation diverges on small folds (median per-realization $\pehe\approx24$ on
IHDP --- with realization means reaching $\approx2.9{\times}10^{4}$ --- vs.\ $\approx0.9$
for T-/S-Learner; on synthetic it degenerates to $\pehe\approx721$ while the CausalForest
and S-Learner lead at $\approx0.09$). Metric agreement is inherently evaluated
over the candidate estimators, so a very different panel could yield different correlations;
we claim robustness to exclusions and base-learner swaps, not estimator-panel independence.

\paragraph{DR-Learner instability.} The DR-Learner is EconML's
\texttt{DRLearner} with default nuisance settings (default \texttt{min\_propensity}; no
additional clipping). On IHDP's ${\approx}450$-unit folds the propensity model can predict
near $0$ or $1$, so the inverse-weighted AIPW pseudo-outcome --- and hence $\pehe$ ---
diverges on particular seeds/folds (up to $\drPeheMax$) --- a known numerical risk of
inverse-propensity-weighted pseudo-outcomes under extreme estimated propensities
\citep[cf.][]{kennedy2020optimal}. It is seed/fold-dependent rather than a tuning-budget
artifact: raising the budget from $B{=}10$ to $B{=}50$ at fixed seed does not remove it
(split-0 mean $\drAuditBTen\rightarrow\drAuditBFifty$ over three folds --- a small audit,
so read as \emph{not explained by} the budget rather than as invariance). Because the F1
analysis is rank-based, this magnitude does not affect any reported correlation, and the
paired AUUC-over-Qini gap holds in all six estimator-exclusion $\times$ base-learner
configurations.

\section{Extended robustness and uncertainty analyses}\label{app:extrobust}

\begin{figure}[ht]
  \centering
  \includegraphics[width=\linewidth]{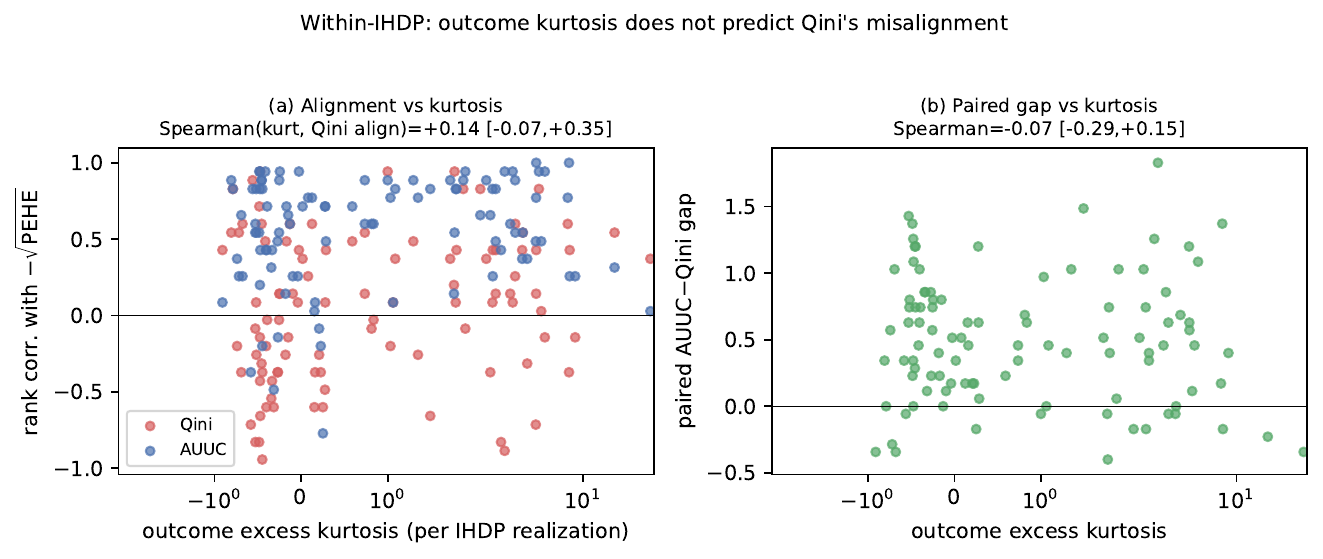}
  \Description{Figure; see the caption for details.}
  \caption{\textbf{Within IHDP, outcome kurtosis does not predict Qini's misalignment}
  (100 realizations). (a) Per-realization rank correlation of Qini and AUUC with
  $-\pehe$ vs.\ outcome excess kurtosis; (b) the paired AUUC$-$Qini gap vs.\ kurtosis.
  Spearman(kurtosis, Qini alignment) $=\kurtQiniCorr$ [$\kurtQiniCorrLo,\kurtQiniCorrHi$];
  Spearman(kurtosis, gap) $=\kurtGapCorr$ [$\kurtGapCorrLo,\kurtGapCorrHi$] --- weakly
  positive and weakly negative respectively, neither useful as a separating diagnostic, so
  the F1 signature is not a within-family tail effect and no scalar kurtosis threshold
  separates failing from non-failing realizations.}
  \label{fig:ihdpkurt}
\end{figure}

\subsection{Searching for a covariate that separates the realizations}
\label{app:sepcov}

Section~\ref{sec:robustness} reports that per-realization outcome kurtosis does not predict
Qini's misalignment within IHDP. Kurtosis is one summary among many, and it is a natural
objection that it is simply the wrong one: the realizations differ by more than an order of
magnitude in outcome \emph{level}, and the $\pehe$ blow-ups concentrate on particular splits.
We therefore repeat the search with three further covariates, all computable ex ante from
observed data with no ground-truth effects: the mean outcome, its coefficient of variation,
and the ratio of mean outcomes between arms.

Two points of interpretation matter before the numbers. First, by Prop.~\ref{prop:m1} every
metric's within-split ranking is invariant to an affine transform of the outcome, so outcome
level or scale \emph{per se} cannot drive F1 --- rescaling a dataset changes nothing. These
covariates vary across realizations because the response surfaces differ, so any correlation
identifies a property the covariate \emph{proxies}, not an effect of scale. The between-arm
ratio is the most interesting candidate precisely because it is arm-asymmetric and therefore
not affine-invariant. Second, F1 is a claim about one metric \emph{relative to another}: a
covariate that predicts how well both Qini and AUUC align merely marks a harder realization,
whereas only a covariate predicting the AUUC-over-Qini \emph{gap} would be a candidate
mechanism. Table~\ref{tab:sepcov} therefore reports all three.

The result is a clean negative on the mechanism question and a modest positive on
difficulty. Mean outcome and the arm ratio both predict alignment for both metrics --- higher
outcome levels go with better alignment, larger between-arm ratios with worse --- so how hard
a realization is for cumulative metrics is partly foreseeable. But no covariate predicts the
gap: the arm ratio, the strongest single-metric correlate at $\sepcovarmratioQini$
[$\sepcovarmratioQiniLo,\sepcovarmratioQiniHi$] against Qini's alignment, moves the gap by
$\sepcovarmratioGap$ [$\sepcovarmratioGapLo,\sepcovarmratioGapHi$]. This mirrors the
controlled tail experiment (Table~\ref{tab:transform}), where thickening the tail degrades
Qini and AUUC together. With $4\times3$ correlations tested, marginal intervals deserve
scepticism; the load-bearing reading is the null on the gap, which is the quantity F1 is
about.

\paragraph{A pre-specified, lemma-derived composite.} Unlike the exploratory covariates
above, one predictor was fixed --- direction and specification --- before being computed.
Lemma~\ref{lem:baseline}'s interleaving term is scaled by the full-sample ATE and its noise
amplified by treatment imbalance, implying that
$R = |\mathrm{ATE}|/\mathrm{SD}(\tau)\times\sqrt{(1-\pi_1)/\pi_1}$ (oracle
$\mathrm{SD}(\tau)$, used to identify, not to recommend) should predict worse Qini
alignment and a larger gap. Tested within family, the first half \emph{confirms in both}:
$\mathrm{corr}(R,\rho_{\mathrm{Qini}})$ is $\mechQiniIhdp$
[$\mechQiniIhdpLo,\mechQiniIhdpHi$] across the 100 IHDP realizations and
$\mechQiniAcic$ [$\mechQiniAcicLo,\mechQiniAcicHi$] across the 18 ACIC instances --- the
first replicated ex-ante predictor of anything in this paper. But the gap half is null in
both ($\mechGapIhdp$ [$\mechGapIhdpLo,\mechGapIhdpHi$]; $\mechGapAcic$
[$\mechGapAcicLo,\mechGapAcicHi$]; the ACIC gap CI is wide enough that 18 instances are
simply underpowered for the gap test), and on IHDP $R$ predicts AUUC's alignment about as
strongly ($\mechAuucIhdp$ [$\mechAuucIhdpLo,\mechAuucIhdpHi$]; on ACIC the AUUC half is
directionally weaker, $\mechAuucAcic$ [$\mechAuucAcicLo,\mechAuucAcicHi$], but its CI
covers zero at $n{=}18$). So \emph{within} a family, by this appendix's own discipline, $R$
is a \emph{difficulty} covariate: it says when cumulative metrics degrade, not why Qini
alone decouples.

\emph{Across} families, however, $R$ does what no other quantity in this paper has: its
per-family medians order the three continuous families exactly as the observed gap gradient
does --- IHDP $\rMedIhdp$, ACIC $\rMedAcic$, \dataset{Revenue-Synthetic} $\rMedRev$
against gaps $\ihdpDelta$, $\acicGap$, $\rsDelta$. Three ordered medians is, by itself, a
one-in-six chance event, so we report this as a \emph{candidate} family-level separating
property, not a mechanism; and unlike the direction and specification of $R$, which were
fixed before computation, the family-level comparison was specified \emph{after} the
gradient was observed, so it carries no pre-registration credit. But it is the first
candidate to survive a test, and its \emph{proxy} form --- $|\widehat{\mathrm{ATE}}|$ over
the spread of predicted CATEs, pooled across candidates --- is genuinely ex-ante computable
and tracks the oracle where estimates are stable: restricted to the four numerically stable
learners, oracle-vs-proxy Spearman is $\proxTrackFourIhdp$ on IHDP and
$\proxTrackFourAcic$ on ACIC (proxy medians $\proxMedFourIhdp$, $\proxMedFourAcic$
against oracles $\rMedIhdp$, $\rMedAcic$). Pooled over all six candidates it fails on IHDP
($\proxTrackSixIhdp$): DR\slash R prediction blow-ups dominate the predicted-CATE spread
--- consistent with this paper's DR account, and a caution that the proxy is usable only
alongside an estimate-stability check. All of this sharply motivates the controlled sweep of
$R$ (the imbalance $\times$ effect-to-heterogeneity cell our five sweeps did not visit),
which is committed next-version work.
\texttt{python scripts/f1\_separating\_covariates.py} and
\texttt{scripts/r19\_analyses.py} regenerate this appendix.

\begin{table*}[t]
\centering
\caption{Searching for a realization-level covariate that separates the IHDP realizations where Qini's ranking tracks effect accuracy from those where it does not. Spearman correlation across the $\sepcovN$ realizations between each ex-ante covariate (computable from observed data alone) and two alignment outcomes plus AUUC's alignment, with 95\% paired bootstrap CIs. Only a covariate predicting the \emph{gap} would be a candidate mechanism for F1; covariates predicting both alignments merely mark harder realizations. By Prop.~\ref{prop:m1} an affine transform of the outcome leaves every ranking unchanged, so outcome level or scale cannot itself drive F1; these covariates vary because the response surfaces differ, and the arm ratio is the one candidate that is not affine-invariant.}
\label{tab:sepcov}
\small
\begin{tabular}{llrrr}
\toprule
Covariate & Range & vs.\ Qini align. & vs.\ AUUC align. & vs.\ gap \\
\midrule
excess kurtosis & -0.9--27.0 & $+0.14$ [$-0.07,+0.35$] & $+0.13$ [$-0.08,+0.33$] & $-0.07$ [$-0.29,+0.15$] \\
mean outcome $\bar y$ & 1.7--60.9 & $+0.24$ [$+0.05,+0.42$] & $+0.34$ [$+0.14,+0.51$] & $+0.03$ [$-0.18,+0.23$] \\
coef.\ of variation & 0.2--1.1 & $-0.32$ [$-0.49,-0.13$] & $-0.37$ [$-0.55,-0.17$] & $-0.01$ [$-0.21,+0.19$] \\
arm outcome ratio & 0.9--4.9 & $-0.33$ [$-0.49,-0.14$] & $-0.32$ [$-0.50,-0.12$] & $+0.05$ [$-0.15,+0.25$] \\
\bottomrule
\end{tabular}
\end{table*}

\subsection{F1 without tuning: is the separation an artifact of the Qini-tuned panel?}
\label{app:notune}

Because the released protocol tunes every estimator by inner-fold validation Qini
(Section~\ref{sec:limitations}), the candidate panel that all metrics score was itself
selected under one of the metrics we scrutinize. The sharpest test of whether this induces
F1 is to remove tuning altogether. We rerun the primary continuous panel --- the same 10
IHDP splits, the same $\notuneNModels$ estimators applicable to continuous outcomes, the
same repeated stratified $3\times3$ fold protocol --- with \texttt{tuning.enabled=false}, so
every estimator runs at its library default hyperparameters and no inner Qini search occurs
anywhere in the pipeline (\texttt{make repro-f1-notune}; 60 cells, all completing).

The separation persists. Rank correlation with $-\pehe$ across models, averaged over the 10
splits with a dataset-level bootstrap CI: Qini $\notuneQini$
[$\notuneQiniLo,\notuneQiniHi$] --- again no detectable alignment --- against AUUC
$\notuneAuuc$ [$\notuneAuucLo,\notuneAuucHi$] and uplift-at-$k$ $\notuneUk$
[$\notuneUkLo,\notuneUkHi$]. The load-bearing paired AUUC-over-Qini gap is $\notuneDelta$
[$\notuneDeltaLo,\notuneDeltaHi$], positive on $\notunePos$ of 10 splits, with the CI
excluding zero. This is the same qualitative picture as the tuned panel and a gap of
comparable magnitude, obtained with the Qini objective removed from model selection
entirely. F1 is therefore not an artifact of Qini-tuned hyperparameter search: it is
reproduced when nothing in the pipeline optimizes Qini.

\paragraph{The other tuning arm: selecting the panel by AUUC} Removing tuning is not a
neutral control either, because library defaults are not equally favourable to all six
estimators. The complementary test is to keep tuning at the released budget and
\emph{switch} the objective, selecting every candidate by inner-fold validation AUUC --- the
metric F1 says is the reliable one on these data. If the AUUC-over-Qini gap were an artifact
of the selection criterion, this arm is where it should be largest; the mirror-image
objection, that AUUC only looks good because we never let it choose the panel, is settled
here too (\texttt{make repro-f1-auuctune}; 60 cells, all completing).

The gap does not grow. With the inner objective set to AUUC, Qini's rank correlation with
$-\pehe$ is $\atQini$ [$\atQiniLo,\atQiniHi$] --- still no detectable alignment --- against
AUUC's $\atAuuc$ [$\atAuucLo,\atAuucHi$], for a paired gap of $\atDelta$
[$\atDeltaLo,\atDeltaHi$], positive on $\atPos$ of 10 splits. That is slightly
\emph{smaller} than the released Qini-tuned panel's $\relDelta$ [$\relDeltaLo,\relDeltaHi$] and close to the
untuned arm's $\notuneDelta$. Across all three arms --- panel selected by Qini, by nothing,
and by AUUC --- Qini's correlation with effect accuracy is indistinguishable from zero and
AUUC's is substantial, so the finding is invariant to which of the two metrics does the
selecting. What remains untested is tuning by $\pehe$ or by policy risk; the first is
unavailable in practice (it needs the ground truth the metric is standing in for) and the
second targets a different objective, so neither is a like-for-like control.

\begin{table*}[t]
\centering
\caption{F1 under three inner-loop selection criteria, on the same 10 IHDP splits, the same
six estimators, and the same $3\times3$ fold protocol. Rank correlations are with $-\pehe$
across models, averaged over splits with a dataset-level bootstrap CI; $\Delta$ is the paired
AUUC-over-Qini gap. The released panel is Qini-tuned; the second arm removes tuning; the third
selects candidates by AUUC at the released budget. Qini shows no detectable alignment in every
arm and $\Delta$ stays positive with its CI excluding zero, so F1 does not depend on which
metric selects the candidates.}
\label{tab:tunearms}
\small
\begin{tabular}{lccc}
\toprule
Inner objective & $\rho$(Qini) & $\rho$(AUUC) & $\Delta$ (paired) \\
\midrule
Qini (released) & $\relQini$ [$\relQiniLo,\relQiniHi$] & $\relAuuc$
  [$\relAuucLo,\relAuucHi$] & $\relDelta$ [$\relDeltaLo,\relDeltaHi$] \\
none (defaults) & $\notuneQini$ [$\notuneQiniLo,\notuneQiniHi$] & $\notuneAuuc$
  [$\notuneAuucLo,\notuneAuucHi$] & $\notuneDelta$ [$\notuneDeltaLo,\notuneDeltaHi$] \\
AUUC & $\atQini$ [$\atQiniLo,\atQiniHi$] & $\atAuuc$ [$\atAuucLo,\atAuucHi$] &
  $\atDelta$ [$\atDeltaLo,\atDeltaHi$] \\
\bottomrule
\end{tabular}
\end{table*}

\paragraph{Pooled statistic (why we do not lead with it).} Collapsing
Qini-vs-reference-objective into a single mean rank-correlation gives $-0.18$ over the 21
datasets with a reference objective, but its 95\% bootstrap CI $[-0.37,+0.01]$ crosses zero,
and pooling continuous- and binary-outcome datasets conflates the two findings. The
decomposed, per-finding statistics in the main text are the paper's quantitative claims;
the pooled number is reported only for completeness.

\paragraph{F2 selector ladder, full detail.} The oracle bound (selecting on the evaluation
seed itself) gains $\selOracleGain$; the reference (risk-minimizing) winner is stable across
rotations with modal agreement $\selRefModalAgree\%$; Qini and AUUC pick identical winners
on Jobs, consistent with their mutual agreement on binary outcomes. Metric gains over
random: Qini $\selQiniGain$ [$\selQiniGainLo,\selQiniGainHi$], AUUC $\selAuucGain$
[$\selAuucGainLo,\selAuucGainHi$], uplift-at-$k$ $\selUpliftGain$
[$\selUpliftGainLo,\selUpliftGainHi$].

\paragraph{F2 regret, secondary statistics.} Per-metric regrets: Qini $\regQini$
[$\regQiniLo,\regQiniHi$], AUUC $\regAuuc$ [$\regAuucLo,\regAuucHi$], uplift-at-$k$
$\regUplift$ [$\regUpliftLo,\regUpliftHi$] (cluster bootstrap over $\regN$ Jobs splits;
random baseline $\regBase$). The $\regRelLo$--$\regRelHi\%$ relative figure is the regret
as a fraction of the \emph{mean} best-candidate risk across rotations (the per-rotation
best risk ranges $\regBestLo$--$\regBestHi$). Paired
differences vs.\ random span zero for all three metrics ($[\regDiffLo,\regDiffHi]$,
$[\regDiffAuucLo,\regDiffAuucHi]$, $[\regDiffUpliftLo,\regDiffUpliftHi]$); the
metric-selected model is within $0.005$/$0.01$/$0.02$ risk of the reference only
$\regWithinA\%$/$\regWithinB\%$/$\regWithinC\%$ of the time.

\paragraph{F2 rank correlations, read carefully.} The regret is mirrored by negative rank
correlations with policy \emph{value}: AUUC and uplift-at-$k$ correlate at $-0.24$
[$-0.44,-0.02$] and $-0.30$ [$-0.51,-0.05$] (excluding zero), while Qini is $-0.22$ but
borderline ($[-0.43,+0.02]$). F2 therefore does not rest on the Qini correlation alone; the
regret and selector-ladder results establish the gap for all three metrics directly. By the
appropriate metric the \model{DR-Learner} and \model{S-Learner} give the lowest Jobs policy
risk, in a narrow band of $0.21$--$0.25$ (leaderboards in Appendix~\ref{app:tables}).

\paragraph{causalml implementation comparison, full detail.} Over the $\ihdpN$
continuous-outcome datasets (all IHDP realizations),
\texttt{causalml}'s shipped \texttt{qini\_score} (v0.16.0, default normalization) and our
canonical unnormalized definition produce near-identical model rankings (mean per-dataset
Spearman $\cmlRankRho$ [$\cmlRankRhoLo,\cmlRankRhoHi$]; Qini-best model agreement
$\cmlWinnerAgree\%$), and the F1 statistic is unchanged: $\canMOne$
[$\canMOneLo,\canMOneHi$] canonical vs.\ $\cmlMOne$ [$\cmlMOneLo,\cmlMOneHi$] under
\texttt{causalml}. The silently computed object practitioners receive behaves exactly as
the object we analyze.

\paragraph{Budget grid, per-budget values.} Regret at each budget: $\kregUpliftZeroOne$
[$\kregUpliftZeroOneLo,\kregUpliftZeroOneHi$] ($k{=}0.1$), $\kregUpliftZeroTwo$
[$\kregUpliftZeroTwoLo,\kregUpliftZeroTwoHi$] ($k{=}0.2$), $\kregUpliftZeroThree$
[$\kregUpliftZeroThreeLo,\kregUpliftZeroThreeHi$] ($k{=}0.3$), $\kregUpliftZeroFive$
[$\kregUpliftZeroFiveLo,\kregUpliftZeroFiveHi$] ($k{=}0.5$); PV-AUC selector $\kregPVAUC$
[$\kregPVAUCLo,\kregPVAUCHi$]; rank stability across budgets $\kgridStab$
[$\kgridStabLo,\kgridStabHi$]. Value-reference rotation at $k{=}0.3$: Qini regret
$\bOneQiniVrefReg$ [$\bOneQiniVrefRegLo,\bOneQiniVrefRegHi$], gain over random
$\bOneQiniVrefGain$ [$\bOneQiniVrefGainLo,\bOneQiniVrefGainHi$].

\paragraph{Estimation noise and split dependence (full detail).}
Two structural caveats bound what the F2 intervals can claim, and we measure both. First,
the IPW policy risk is a noisy estimand on the small experimental subset: unit-level
bootstrapping the experimental units within each evaluation fold (predictions held fixed)
gives a per-split risk standard error with median $\riskUnitSeMed$
(10--90\% range $[\riskUnitSeLo,\riskUnitSeHi]$) --- the same order as the regret itself, so
no single split is informative; the inference runs entirely through pairing and
averaging (3 rotations $\times$ 10 splits). Second, because the Jobs splits
re-partition the same observations, the cluster bootstrap's exchangeability assumption may
understate uncertainty. A design-effect sensitivity makes the assumption's role explicit:
writing $\mathrm{SE}_{\mathrm{true}}=\mathrm{SE}_{\mathrm{iid}}\sqrt{1+(n{-}1)\bar\rho}$
for average between-split dependence $\bar\rho$, the 95\% regret interval continues to
exclude zero only for $\bar\rho\le\regRhoStarQini$ (Qini and AUUC;
$\regRhoStarUplift$ for uplift-at-$k$); the leave-one-split-out jackknife matches the iid
SE, and the regret is positive in $\regPosQini$/10 splits. The same caveat applies to the
selector-ladder gains above. We therefore read F2 as a \emph{consistently positive regret of
modest statistical strength}, resting on the convergence of four diagnostics ---
convergent, not statistically independent, since all four read the same underlying
observations ---
the rotation regret, the sign pattern, the negative rank correlations (which replicate under
XGBoost), and the risk-vs-metric selector contrast --- rather than on any single interval.

\paragraph{Replacement metrics (full detail).} We add the rank-weighted average
treatment effect \citep{yadlowsky2025rate} to the harness (IPW effect scores from the
fold propensities; AUTOC and Qini weightings) and score the same stored predictions. On the
IHDP continuous panel RATE improves on Qini but only marginally: rank correlation
with $-\pehe$ is $\rateAutocMOne$ [$\rateAutocMOneLo,\rateAutocMOneHi$] (AUTOC) and
$\rateQiniMOne$ [$\rateQiniMOneLo,\rateQiniMOneHi$] (Qini-weighted) --- statistically
positive, unlike Qini's $\canMOne$, yet far below AUUC's $\auucMOne$
[$\auucMOneLo,\auucMOneHi$] on the identical frame. RATE alone does not restore
effect-accuracy tracking here. Nor does variance reduction: recomputing Qini on
\emph{oracle} baseline-adjusted outcomes $y-\mu_0(x)$ (an optimistic benchmark for
outcome-residualization-based variance reduction, cf.\ the corollary to
Lemma~\ref{lem:baseline}) lifts the correlation only to
$\bFourAdj$ [$\bFourAdjLo,\bFourAdjHi$] on the $\bFourN$ IHDP realizations --- the same
modest level as RATE, and far from AUUC --- so estimation variance of the Qini functional is
not the driver of F1. These two probes do not exhaust the proposed replacements: PUC,
pROCini, and the exact variance-reduced estimator of \citet{bokelmann2024improving} remain
unbenchmarked (the oracle adjustment upper-bounds the outcome-residualization class
considered here, which is the question relevant to F1).

\paragraph{Mechanism-candidate sweeps (full detail).}
A concrete mechanism candidate for F1 lives in Lemma~\ref{lem:baseline}: the interleaving
term proportional to $T_k-\tfrac{k}{n}T_n$ multiplies the full-sample ATE estimate, whose
noise is heavy-tailed in the controlled sweep, and treatment \emph{imbalance} (IHDP:
$\pi_1{=}0.19$) inflates those count fluctuations. We tested it directly: a controlled sweep
of $\pi_1\in\{0.50,0.35,0.19,0.10\}$ crossed with the four outcome regimes (identical latent
CATE and model panel; $\piSweepNSeeds$ seeds per cell). The result is null in all 16 cells:
the paired Qini$-$AUUC fidelity gap is never significantly negative --- at the IHDP-matched
cell ($\pi_1{=}0.19$, heavy log-normal) it is $\piSweepImbDelta$
[$\piSweepImbDeltaLo,\piSweepImbDeltaHi$] --- so imbalance $\times$ tails does not reproduce
the benchmark's Qini isolation either. This tightens the boundary of
Section~\ref{sec:m1boundary}: scale artifacts, single estimators, base learners,
implementation variants, kurtosis alone, and now imbalance $\times$ kurtosis are all
eliminated as sole drivers. We then tested the last candidate we considered plausible ---
score errors \emph{correlated with the outcome model} (heteroskedastic in the realized
potential-outcome magnitude), mixed into the panel at $\lambda\in[0,1]$ and crossed with
the kurtosis ladder ($\correrrNSeeds$ seeds/cell). It is also null, and informatively so:
under correlated errors and heavy tails the paired gap moves in Qini's \emph{favor}
($\correrrKeyDelta$ [$\correrrKeyDeltaLo,\correrrKeyDeltaHi$] at $\lambda{=}0.75$,
heavy log-normal), never reproducing the benchmark's isolation in any cell. With scale,
kurtosis alone, imbalance $\times$ kurtosis, correlated errors $\times$ kurtosis, and
estimation variance (the oracle-adjusted benchmark above) all eliminated, we position F1 as a
benchmark-discovered failure signature whose isolating mechanism remains open despite
targeted tests --- the signature itself is what the multi-regime benchmark contributes.

\paragraph{Tuning budget (full detail).} A bounded tuning budget ($B{=}10$) compresses the quality
spread of the candidate panel, which can attenuate \emph{all} model-level correlations and
makes small-panel Spearman statistics noisier in both directions; we do not claim limited
tuning cannot interact with individual metrics. Two design features bound the concern: every
metric scores the \emph{identical} out-of-fold predictions, and the load-bearing statistic
is the paired AUUC$-$Qini gap $\Delta$ with its CI --- robust across base learners and
estimator exclusions --- not any absolute correlation level.

\section{Calibration: pooled contrast and bin-count robustness}
\label{app:calibration}

This appendix supports Section~\ref{sec:calibration}. Restricting to the $\calNident$
randomized instances (where the treated-minus-control bin contrast identifies uplift), Qini
and $-$ECE correlate at $\calRhoId$ [$\calRhoIdLo, \calRhoIdHi$] and the best-Qini and best-ECE
models differ in $\calDisIdN$/$\calDisIdD$ instances with a small paired ECE cost (median
$\calDeltaMed$). Pooling over all $25$ instances --- \emph{including} the confounded IHDP/Jobs
datasets, where the within-bin contrast conflates uplift with selection and the ECE is biased
--- instead drives the correlation to $\calRhoAll$ [$\calRhoAllLo, \calRhoAllHi$] and inflates
the winner disagreement to $\calDisAllN$/$\calDisAllD$. The apparent pooled ``disagreement'' is
thus driven by applying an unidentified within-bin contrast to confounded datasets, not by a
Qini--calibration gap.
The ECE-induced ranking is itself bin-count-robust: on a controlled panel ($\calPanelModels$
models, $\calPanelSeeds$ seeds) the rankings at $5$, $10$ and $20$ bins are identical, with
every pairwise Spearman correlation equal to $\calBinHi$.

\section{Leaderboards and tables}
\label{app:tables}

\begin{figure*}[t]
  \centering
  \includegraphics[width=\linewidth]{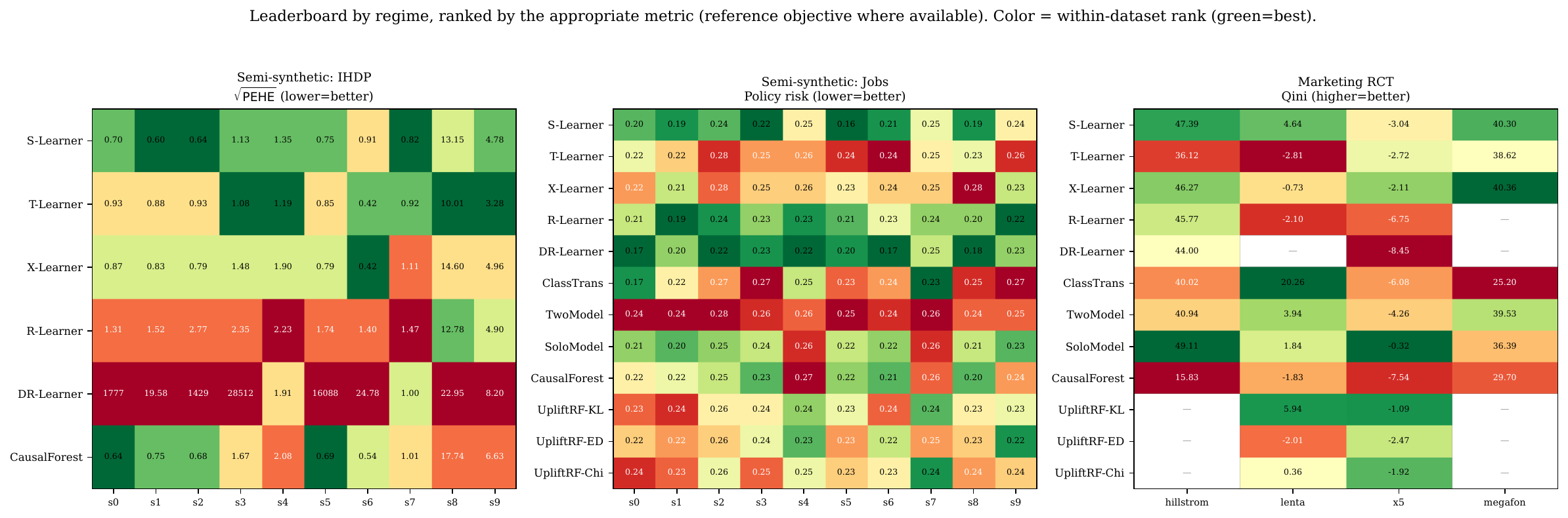}
  \Description{Per-regime leaderboard heatmap; cell color encodes within-dataset model rank by the regime's appropriate metric.}
  \caption{Per-regime leaderboard, each panel scored by its appropriate metric (color =
  within-dataset rank, green best); ``---'' marks binary-only models on continuous IHDP.
  IHDP cells are the arithmetic mean of fold-level $\pehe$ (not $\sqrt{\text{mean PEHE}}$),
  so single divergent folds inflate the affected DR-Learner cells; rankings are unaffected.}
  \label{fig:leaderboard}
\end{figure*}

\begin{figure*}[t]
  \centering
  \begin{subfigure}[b]{0.49\linewidth}
    \includegraphics[width=\linewidth]{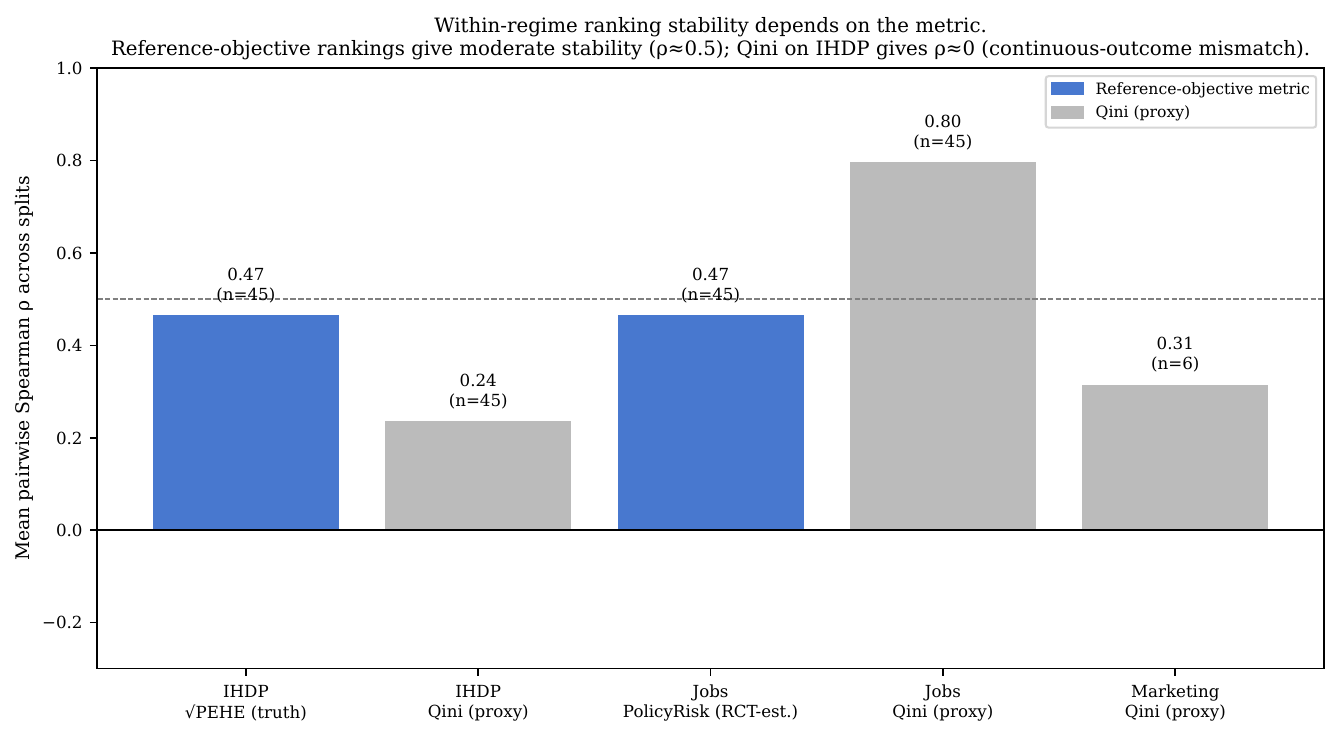}
  \Description{Bar chart of mean within-dataset rank stability by metric.}
    \caption{Within-regime rank stability by metric.}
  \end{subfigure}\hfill
  \begin{subfigure}[b]{0.49\linewidth}
    \includegraphics[width=\linewidth]{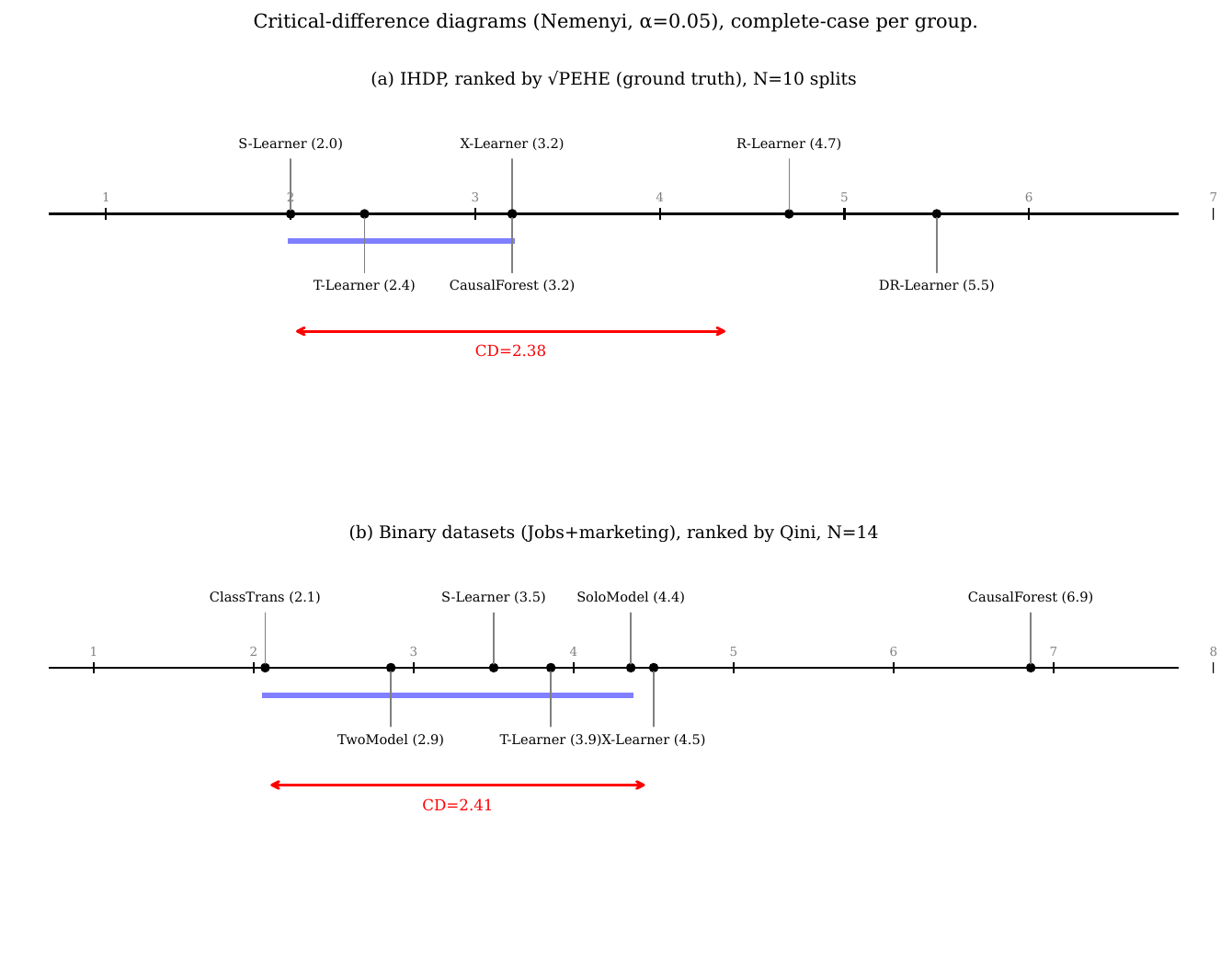}
  \Description{Nemenyi critical-difference diagrams over the estimators.}
    \caption{Complete-case critical-difference diagrams.}
  \end{subfigure}
  \caption{Supporting analyses for Section~\ref{sec:robustness}--\ref{sec:m2}.}
  \label{fig:supp}
\end{figure*}

Table~\ref{tab:disagree} lists, per dataset with a reference objective, the best model by that objective vs.\ by
Qini; Tables~\ref{tab:ece} and~\ref{tab:ecebin} report calibration ECE on the
continuous- and binary-outcome datasets respectively. ECE is tabulated per outcome family
because the IHDP and Jobs realization labels both abbreviate to s0--s9; the DR-Learner's
large IHDP ECE is consistent with the numerical instability documented above. Reporting
convention: leaderboard and ECE cells are means of the corresponding fold-level quantity
(for IHDP, of fold-level $\pehe$), so a single divergent fold can dominate a cell while
leaving every rank-based statistic unchanged.

\begin{table*}[t]
\centering
\caption{F1 estimator-exclusion sensitivity on the continuous-outcome datasets (IHDP $\times$10). $\rho$: mean within-dataset Spearman correlation with $-\pehe$. $\Delta$: paired per-dataset difference $\rho(\mathrm{AUUC})-\rho(\mathrm{Qini})$ with 95\% bootstrap CI. The AUUC advantage is positive in all six configurations; its CI excludes zero in five of six (the exception is XGBoost with DR- and R-Learner excluded).}
\label{tab:m1sens}
\small
\begin{tabular}{llrrrl}
\toprule
Base learner & Estimators & Qini $\rho$ & AUUC $\rho$ & $\Delta$ & 95\% CI \\
\midrule
LightGBM & six meta/forest estimators & +0.02 & +0.65 & +0.63 & [+0.35, +0.91] \\
LightGBM & DR-Learner excluded & +0.12 & +0.52 & +0.40 & [+0.10, +0.70] \\
LightGBM & DR- and R-Learner excluded & -0.08 & +0.34 & +0.42 & [+0.06, +0.80] \\
XGBoost & six meta/forest estimators & -0.21 & +0.48 & +0.69 & [+0.44, +0.90] \\
XGBoost & DR-Learner excluded & -0.03 & +0.41 & +0.44 & [+0.15, +0.73] \\
XGBoost & DR- and R-Learner excluded & -0.12 & +0.22 & +0.34 & [-0.16, +0.84] \\
\bottomrule
\end{tabular}
\end{table*}

\begin{table*}[t]
\centering
\caption{Affine invariance. Multiplying or shifting a Gaussian outcome leaves every metric's within-split model ranking (and thus its $\rho$ with truth) unchanged: the count/mean corrections cancel affine transforms. The F1 failure is therefore not driven by outcome scale or location; distributional shape remains relevant but does not by itself explain the observed Qini/AUUC separation.}
\label{tab:affine}
\small
\begin{tabular}{lrrr}
\toprule
Transform & Qini (raw) & AUUC & Uplift@$k$ \\
\midrule
$y$ & +0.91 & +0.89 & +0.85 \\
$10y$ & +0.91 & +0.89 & +0.85 \\
$y+100$ & +0.91 & +0.89 & +0.85 \\
$3y+50$ & +0.91 & +0.89 & +0.85 \\
\bottomrule
\end{tabular}
\end{table*}

\begin{table*}[t]
\centering
\caption{Controlled outcome-distribution experiment. Spearman $\rho$ (mean over 30 seeds, $\pm$ s.e.) between each metric's ranking of a fixed 8-model panel and the \emph{known} quality order ($-\pehe$ against the latent CATE), with the models and latent CATE held fixed as only the outcome distribution's tail varies. All cumulative ranking metrics lose fidelity as the tail thickens, and Qini and AUUC degrade \emph{together}: the AUUC-over-Qini advantage seen on IHDP (Sec.~\ref{sec:robustness}) is thus an empirical property of that dataset, not a property guaranteed by construction --- and on \dataset{Revenue-Synthetic}, whose outcome is far heavier-tailed, the advantage reverses.}
\label{tab:transform}
\small
\begin{tabular}{lrrrr}
\toprule
Outcome regime & excess kurt. & Qini (raw) & AUUC & Uplift@$k$ \\
\midrule
Gaussian & 0.2 & +0.91 & +0.89 & +0.85 \\
Student-$t_3$ & 7.5 & +0.85 & +0.83 & +0.76 \\
Log-normal (mod.) & 17.5 & +0.79 & +0.68 & +0.69 \\
Log-normal (heavy) & 70.3 & +0.48 & +0.36 & +0.43 \\
\bottomrule
\end{tabular}
\end{table*}

\begin{table*}[t]
\centering
\caption{Full data for the F1 existence counterexample (Section~\ref{sec:m1}). Twelve units; $t$ treatment, $y$ observed outcome (one large control outcome at unit~3), $\tau$ true CATE; models A and B are the scored uplift predictions, with their induced descending-sort ranks. Metrics computed by the shipped functions give $Q_A{=}\cxQiniA{>}Q_B{=}\cxQiniB$ (Qini prefers the worse A) but $\mathrm{AUUC}_A{=}\cxAuucA{<}\mathrm{AUUC}_B{=}\cxAuucB$ and $\pehe_A{=}\cxPeheA{>}\pehe_B{=}\cxPeheB$ (both prefer B). Setting $y$ at unit~3 to $0$ yields $Q_A{=}\cxQiniAdrop{<}Q_B{=}\cxQiniBdrop$.}
\label{tab:cxvectors}
\small
\begin{tabular}{r rrr rr rr}
\toprule
unit & $t$ & $y$ & $\tau$ & score A & rank A & score B & rank B \\
\midrule
1 & 0 & 0 & 1 & 0.087 & 11 & 1.077 & 8 \\
2 & 0 & 0 & 1 & 0.127 & 9 & 0.999 & 9 \\
3 & 0 & 40 & 2 & 0.753 & 4 & 2.026 & 5 \\
4 & 1 & 1 & 1 & 0.241 & 8 & 0.995 & 10 \\
5 & 1 & 0 & 0 & 0.800 & 2 & -0.084 & 12 \\
6 & 1 & 2 & 2 & 0.094 & 10 & 2.056 & 3 \\
7 & 0 & 0 & 2 & 0.771 & 3 & 2.037 & 4 \\
8 & 1 & 2 & 2 & 0.443 & 6 & 2.060 & 2 \\
9 & 0 & 0 & 0 & 0.804 & 1 & -0.053 & 11 \\
10 & 1 & 2 & 2 & 0.015 & 12 & 2.007 & 6 \\
11 & 0 & 0 & 2 & 0.751 & 5 & 2.095 & 1 \\
12 & 0 & 0 & 1 & 0.411 & 7 & 1.110 & 7 \\
\bottomrule
\end{tabular}
\end{table*}

\begin{table*}[t]
\centering
\caption{Documentation of the all-100 IHDP validation (Section~\ref{sec:robustness}). Splits 0--9 come from the main benchmark, 10--29 from the F1-extension run, and 30--99 from a dedicated validation run; all use the same loaders, LightGBM base learner, outer-test-isolated protocol and $3\times3$ outer CV. The validation run uses a smaller tuning budget ($B{=}6$ rather than $B{=}10$) and restricts to the six meta/forest learners common to every run. Correlations are complete-case per split (a split enters if $\ge 4$ models have a defined score).}
\label{tab:ihdp100doc}
\small
\begin{tabular}{lll}
\toprule
Item & Value & Notes \\
\midrule
Realizations & 100 (splits 0--99) & all completed \\
Estimators & S/T/X/R/DR-Learner, Causal Forest & common to all three runs \\
Base learner & LightGBM & identical across splits \\
Outer CV & $3$ seeds $\times\,3$ folds & identical across splits \\
Tuning budget & $B{=}10$ (splits 0--29), $B{=}6$ (30--99) & differs across runs \\
Models per split correlation & 6--6 (median 6) & complete-case \\
Missing-data rule & split included if $\ge4$ models scored & no imputation \\
\bottomrule
\end{tabular}
\end{table*}

\begin{table*}[t]
\centering
\caption{Audit of the F2 Jobs policy-regret experiment under both base learners (Section~\ref{sec:m2}). Regret is the cross-repeat selection/evaluation rotation regret (mean over the ten Jobs splits, 95\% cluster bootstrap); $\rho(\text{Qini},-R_{\mathrm{policy}})$ is the mean per-split Spearman correlation with policy value. Both halves of F2 --- the ranking metrics' negative association with policy value and the positive selection regret --- replicate under XGBoost (indeed Qini's correlation is no longer borderline there).}
\label{tab:regretaudit}
\small
\begin{tabular}{lll}
\toprule
Item & LightGBM & XGBoost \\
\midrule
Jobs splits & 10 & 10 \\
Candidate models & 12 & 12 \\
Completed models / split & 12--12 & 12--12 \\
Qini regret & +0.028 [+0.009, +0.046] & +0.022 [+0.011, +0.032] \\
AUUC regret & +0.028 [+0.009, +0.046] & +0.017 [+0.005, +0.028] \\
Uplift-at-$k$ regret & +0.026 [+0.006, +0.045] & +0.023 [+0.012, +0.034] \\
$\rho(\text{Qini},-R_{\mathrm{policy}})$ & -0.24 [-0.47, +0.01] & -0.35 [-0.53, -0.18] \\
Bootstrap unit & split & split \\
\bottomrule
\end{tabular}
\end{table*}

\begin{table*}[t]
\centering
\small
\caption{Best model by reference objective vs by Qini, and their rank correlation. The reference objective is ground-truth $\sqrt{\mathrm{PEHE}}$ on IHDP and \emph{RCT-estimated} policy risk on Jobs (no per-unit ground truth exists on Jobs).} \label{tab:disagree}
\begin{tabular}{lrrr}
\toprule
Dataset & Best by reference objective & Best by Qini & rank-corr \\
\midrule
IHDP-s0 & CausalForest & T-Learner & -0.26 \\
IHDP-s1 & S-Learner & DR-Learner & -0.31 \\
IHDP-s2 & S-Learner & X-Learner & -0.14 \\
IHDP-s3 & T-Learner & S-Learner & 0.71 \\
IHDP-s4 & T-Learner & DR-Learner & 0.49 \\
IHDP-s5 & CausalForest & DR-Learner & -0.83 \\
IHDP-s6 & X-Learner & X-Learner & 0.37 \\
IHDP-s7 & S-Learner & S-Learner & 0.89 \\
IHDP-s8 & T-Learner & DR-Learner & -0.83 \\
IHDP-s9 & T-Learner & CausalForest & 0.09 \\
Jobs-s0 & DR-Learner & ClassTrans & -0.18 \\
Jobs-s1 & R-Learner & ClassTrans & 0.18 \\
Jobs-s2 & DR-Learner & ClassTrans & -0.55 \\
Jobs-s3 & S-Learner & ClassTrans & -0.70 \\
Jobs-s4 & DR-Learner & ClassTrans & 0.31 \\
Jobs-s5 & S-Learner & ClassTrans & -0.57 \\
Jobs-s6 & DR-Learner & ClassTrans & -0.50 \\
Jobs-s7 & ClassTrans & ClassTrans & 0.45 \\
Jobs-s8 & DR-Learner & ClassTrans & -0.52 \\
Jobs-s9 & R-Learner & ClassTrans & -0.36 \\
synthetic & CausalForest & TwoModel & -0.22 \\
\bottomrule
\end{tabular}
\end{table*}

\begingroup
\tiny
\setlength{\tabcolsep}{1pt}
\begin{table*}[t]
\centering
\small
\caption{Calibration ECE, continuous-outcome datasets (IHDP s0--s9); mean across folds, lower=better. Uplift ECE is $\sum_b w_b\lvert \bar s_b - \widehat{\tau}_b\rvert$ on the \emph{score} scale, so it is unbounded above: the large DR-Learner entries inherit the same AIPW pseudo-outcome divergence documented for $\pehe$ (Appendix~\ref{app:m1detail}), not a separate defect.} \label{tab:ece}
\resizebox{\textwidth}{!}{%
\begin{tabular}{lrrrrrrrrrr}
\toprule
Model & s0 & s1 & s2 & s3 & s4 & s5 & s6 & s7 & s8 & s9 \\
\midrule
S-Learner & 0.629 & 0.583 & 0.591 & 0.649 & 0.590 & 0.675 & 0.811 & 0.601 & 5.261 & 2.281 \\
T-Learner & 0.683 & 0.697 & 0.655 & 0.659 & 0.613 & 0.652 & 0.514 & 0.727 & 2.309 & 1.223 \\
X-Learner & 0.701 & 0.634 & 0.577 & 0.681 & 1.010 & 0.663 & 0.489 & 0.854 & 6.411 & 1.958 \\
R-Learner & 0.968 & 1.124 & 1.753 & 1.388 & 1.033 & 1.154 & 1.175 & 0.947 & 4.395 & 1.769 \\
DR-Learner & 1283.585 & 13.437 & 1004.196 & 18743.163 & 0.718 & 11055.110 & 14.521 & 0.619 & 11.985 & 4.357 \\
CausalForest & 0.630 & 0.673 & 0.595 & 0.892 & 1.046 & 0.637 & 0.552 & 0.656 & 7.988 & 3.054 \\
\bottomrule
\end{tabular}}
\end{table*}

\begin{table*}[t]
\centering
\small
\caption{Calibration ECE, binary-outcome datasets (Jobs s0--s9, Synthetic, marketing RCTs); mean across folds, lower=better. Values are not bounded by 1 even for binary outcomes, because the predicted uplift entering the score-scale ECE is unbounded; the two DR-Learner outliers (Jobs s7, s9) are that divergence. Every ECE statistic quoted in the text is a \emph{median} paired cost, so these cells do not drive any reported number.} \label{tab:ecebin}
\resizebox{\textwidth}{!}{%
\begin{tabular}{lrrrrrrrrrrrrrrr}
\toprule
Model & s0 & s1 & s2 & s3 & s4 & s5 & s6 & s7 & s8 & s9 & synthetic & hillstrom & lenta & x5 & megafon \\
\midrule
S-Learner & 0.172 & 0.202 & 0.157 & 0.146 & 0.182 & 0.149 & 0.135 & 0.180 & 0.151 & 0.148 & 0.097 & 0.019 & 0.013 & 0.089 & 0.048 \\
T-Learner & 0.197 & 0.229 & 0.177 & 0.157 & 0.182 & 0.207 & 0.182 & 0.168 & 0.155 & 0.157 & 0.212 & 0.032 & 0.094 & 0.088 & 0.080 \\
X-Learner & 0.176 & 0.206 & 0.148 & 0.150 & 0.157 & 0.201 & 0.126 & 0.154 & 0.165 & 0.196 & 0.192 & 0.019 & 0.064 & 0.085 & 0.050 \\
R-Learner & 0.267 & 0.304 & 0.268 & 0.295 & 0.275 & 0.272 & 0.273 & 0.314 & 0.274 & 0.278 & 0.298 & 0.020 & 0.134 & 0.102 & --- \\
DR-Learner & 0.214 & 0.224 & 0.190 & 0.808 & 0.244 & 0.187 & 0.193 & 996.263 & 0.201 & 365.968 & 409.684 & 0.020 & --- & 0.125 & --- \\
ClassTrans & 0.499 & 0.511 & 0.490 & 0.543 & 0.541 & 0.544 & 0.524 & 0.533 & 0.542 & 0.535 & 0.387 & 0.243 & 0.407 & 0.091 & 0.154 \\
TwoModel & 0.213 & 0.235 & 0.177 & 0.184 & 0.179 & 0.196 & 0.176 & 0.171 & 0.168 & 0.185 & 0.317 & 0.023 & 0.063 & 0.100 & 0.059 \\
SoloModel & 0.159 & 0.198 & 0.158 & 0.116 & 0.190 & 0.139 & 0.118 & 0.153 & 0.140 & 0.149 & 0.094 & 0.019 & 0.017 & 0.070 & 0.051 \\
CausalForest & 0.267 & 0.582 & 0.672 & 0.396 & 0.517 & 0.584 & 0.523 & 0.723 & 0.363 & 0.551 & 0.104 & 0.044 & 0.038 & 0.134 & 0.045 \\
UpliftRF-KL & 0.223 & 0.217 & 0.184 & 0.150 & 0.198 & 0.203 & 0.176 & 0.190 & 0.161 & 0.186 & --- & --- & 0.030 & 0.064 & --- \\
UpliftRF-ED & 0.220 & 0.236 & 0.204 & 0.172 & 0.185 & 0.210 & 0.181 & 0.196 & 0.201 & 0.194 & --- & --- & 0.037 & 0.065 & --- \\
UpliftRF-Chi & 0.228 & 0.229 & 0.197 & 0.163 & 0.204 & 0.215 & 0.159 & 0.175 & 0.182 & 0.203 & --- & --- & 0.035 & 0.064 & --- \\
\bottomrule
\end{tabular}}
\end{table*}

\endgroup

\section{Run statistics}
Table~\ref{tab:runstats} accounts for every fold-level evaluation and every
dataset$\times$model cell in the released run (completed, correctly skipped, timed out).
The \dataset{Criteo} supplement (Appendix~\ref{app:criteo}) sits outside this ledger by
design: it is released and versioned, but not part of the 300-cell accounting.
\begin{table*}[t]
\centering
\caption{Benchmark run statistics (current data).}
\label{tab:runstats}
\begin{tabular}{lr}
\toprule
Statistic & Value \\
\midrule
Fold-level evaluations produced & 2{,}112 \\
\quad successful & 2{,}052 \\
\quad correctly skipped (binary-only model, continuous IHDP) & 60 \\
Dataset $\times$ model cells & 300 \\
\quad completed / skipped / timed out & 228 / 60 / 12 \\
Base learners & LightGBM, XGBoost \\
\bottomrule
\end{tabular}
\end{table*}

\clearpage
\section{The Criteo supplement}
\label{app:criteo}
\dataset{Criteo} Uplift v2.1 \citep{diemert2018large} is the field's largest public uplift
RCT: 13.9M rows, 12 features, treatment fraction $\pi_1{=}0.85$, binary \texttt{visit}
outcome with base rate $0.047$ (1M tier). The supplement (repository v1.4.3,
\texttt{results\_criteo/} and \texttt{results\_criteo100k/}) runs all 12 estimators under
the released protocol on the 1M tier subsampled to the 10K evaluation cap for comparability
with the other large RCTs, plus a 100K probe of nine of the twelve estimators (the three
uplift forests are omitted on compute grounds: ${\sim}17$ minutes per cell already at 10K,
scaling superlinearly) --- the
largest-scale released result. It sits outside the 25-instance\slash 300-cell ledger and
enters neither F1 nor F2 (no ground-truth effects). Table~\ref{tab:criteo} reports
per-model Qini with fold-level standard errors at both scales.

\paragraph{Subsampling design.} The cap subsamples the \emph{entire dataset} before outer
folding (each tier's $n$ is then split by the standard $3{\times}3$ repeated CV, so a 10K
tier has ${\approx}6{,}666$ training and ${\approx}3{,}334$ test rows per fold), stratified
jointly by treatment $\times$ outcome via a seeded \texttt{StratifiedShuffleSplit}. Both
tiers draw from the same 1M tier with the same seed and identical fold protocol, but the
10K draw is \emph{not nested} in the 100K draw (stratified draws are not prefix-stable
across sizes) --- so the cross-tier rank comparison spans one pair of separately
generated, non-nested stratified subsamples, and training and evaluation sizes change
together across tiers.
Within a tier, all estimators share identical subsampled rows and identical folds, which
is what licenses the paired per-fold differences below.

\paragraph{Replication (descriptive).} The binary-regime cross-metric rank agreement
appears on Criteo as well (Qini--AUUC $+0.74$, Qini--uplift@$k$ $+0.80$,
AUUC--uplift@$k$ $+0.92$ across the 12
estimators --- inside the released binary-panel range); this is a descriptive external
replication on one fixed stratified subsample, not a formal test. \model{ClassTrans} posts
the highest point estimate at the cap ($+11.1 \pm 0.9$), a lead over \model{CausalForest}
that is within fold-level noise (paired per-fold difference $+6.0 \pm 4.5$, positive on 6
of 9 folds); its fold-level
variance is substantially smaller than most alternatives' --- a single classifier on the
transformed label, with no nuisance stacking. At 100K the same two lead in the same order
(\model{ClassTrans} $+96.9 \pm 6.0$ vs.\ \model{CausalForest} $+88.1 \pm 8.8$; paired
$+8.8 \pm 11.0$, positive on 6 of 9 folds): the capped point-estimate winner survives the
tenfold resolution increase, though the two leaders remain within noise of each other at
both tiers.

\paragraph{Resolution.} At the 10K cap,
pairwise gaps among the six meta\slash forest learners were small relative to
fold-to-fold variation --- descriptive non-resolution, not an inferential equivalence
result --- and their rankings did not agree between the
10K and 100K probes (Spearman $-0.03$ across those six;
\model{X-Learner} moves last${\to}$second, \model{R-Learner} third${\to}$last); over all
nine common estimators the correlation is $+0.20$, the agreement carried by the two
leaders. Because this is one subsample comparison and training size also changed, we read
it as evidence consistent with limited resolution at the cap --- not as the cap actively
misranking. At 100K the signal-to-noise ratio improves sharply (the
top point estimates' fold-level $t$ rises from ${\approx}1.2$ to ${\approx}10$), and
paired per-fold differences on identical folds (descriptive --- repeated-CV folds are
dependent, so these are not formal tests) put \model{CausalForest} ahead of every
meta-learner on 6--7 of 9 folds (mean differences $+19.6$ to $+44.1$), while the
\model{ClassTrans}--\model{CausalForest} pair stays unresolved at both tiers.

\paragraph{Confound and mechanism.} Training and
evaluation samples grew together, so isolating evaluation-side resolution would need a
train-at-100K\slash evaluate-at-10K contrast, which we have not run. Criteo is especially
cap-limited by design: $\pi_1{=}0.85$ with a control visit rate of $0.038$ leaves
${\approx}19$ positive control events per test fold at the cap --- supporting evidence for
the resolution reading.

\paragraph{Scope and licensing.} This is why every capped leaderboard in this paper is
stated as
subsample-scoped. \texttt{scripts/leaderboard\_resolution.py} reproduces the analysis for
all marketing RCTs: \dataset{Lenta}'s leader shows a large descriptive separation at the
cap (${>}5$ combined SEs) while the \dataset{Hillstrom}\slash\dataset{X5}\slash\dataset{MegaFon} leads
are within fold-to-fold noise. Criteo Uplift v2.1 is distributed under Criteo Research's own
research-use terms; the harness downloads it from Criteo's server at build time and never
redistributes it.

\begin{table}[ht]
\centering
\caption{Criteo supplement: per-model Qini $\pm$ fold-level, descriptive SE at the 10K
cap (12 estimators) and the 100K probe (nine; uplift forests omitted for compute).
Unnormalized Qini scales with $n$: compare orderings within a tier only. Paired per-fold
differences are in the text.}
\label{tab:criteo}
\small
% auto-generated by scripts/leaderboard_resolution.py
\begin{tabular}{lrr}
\toprule
Model & Qini (10K cap) $\pm$ fold SE & Qini (100K probe) $\pm$ fold SE \\
\midrule
ClassTrans & $+11.09 \pm 0.89$ & $+96.9 \pm 6.0$ \\
CausalForest & $+5.14 \pm 4.24$ & $+88.1 \pm 8.8$ \\
TwoModel & $+4.23 \pm 4.45$ & $+61.7 \pm 4.5$ \\
T-Learner & $+3.34 \pm 2.10$ & $+50.8 \pm 11.7$ \\
R-Learner & $+2.81 \pm 5.76$ & $+44.0 \pm 13.8$ \\
SoloModel & $+1.68 \pm 3.90$ & $+67.7 \pm 6.1$ \\
UpliftRF-Chi & $-0.10 \pm 3.01$ & --- \\
S-Learner & $-0.52 \pm 2.21$ & $+65.4 \pm 15.6$ \\
DR-Learner & $-0.70 \pm 3.41$ & $+63.1 \pm 7.5$ \\
UpliftRF-ED & $-1.37 \pm 2.21$ & --- \\
UpliftRF-KL & $-1.75 \pm 3.13$ & --- \\
X-Learner & $-2.31 \pm 4.67$ & $+68.5 \pm 10.3$ \\
\bottomrule
\end{tabular}

\end{table}

\end{document}